\pgfplotsset{width = 0.48\textwidth, compat = 1.13, height=5.8cm, grid=major, 
	ticklabel style = {font=\sansmath\sffamily\scriptsize},
	ylabel style = {font = \sf\footnotesize, yshift = -0.2cm},
	xlabel style = {font = \sf\footnotesize, yshift = 0.15cm},
	zlabel style = {font = \sf\footnotesize},
	legend style = {font=\sf\scriptsize}, legend cell align = left, 
	title style={yshift=-7pt, font =\sf\footnotesize} }
\newtheorem{assumption}{Assumption}[section]
\newtheorem{remark}{Remark}[section]
\newtheorem{definition}{Definition}[section]
\newtheorem{lemma}{Lemma}[section]
\newtheorem{theorem}{Theorem}[section]
\DeclarePairedDelimiter{\norm}{\lVert}{\rVert}
\DeclarePairedDelimiter{\abs}{\lvert}{\rvert}
\newcommand{\T}{^{\intercal}}
\newcommand{\Bset}{\mathbb{B}}
\newcommand{\Nset}{\mathbb{N}}
\newcommand{\Rset}{\mathbb{R}}
\newcommand{\Xset}{\mathbb{X}}
\newcommand{\Uset}{\mathbb{U}}
\newcommand{\alli}{i=1,\ldots,N}
\newcommand{\N}{\mathcal{N}}
\newcommand{\fh}{\hat{f}}
\newcommand{\fhcd}{\hat{f}(\cdot)}
\newcommand{\e}{\bm{e}}
\newcommand{\kc}{k_c}
\newcommand{\sigon}{\sigma_n^2}
\newcommand{\x}{\bm{x}}
\newcommand{\subalign}[1]{%
	\vcenter{%
		\Let@ \restore@math@cr \default@tag
		\baselineskip\fontdimen10 \scriptfont\tw@
		\advance\baselineskip\fontdimen12 \scriptfont\tw@
		\lineskip\thr@@\fontdimen8 \scriptfont\thr@@
		\lineskiplimit\lineskip
		\ialign{\hfil$\m@th\scriptstyle##$&$\m@th\scriptstyle{}##$\crcr
			#1\crcr
		}%
	}
}
\title{Uniform Error Bounds for Gaussian Process Regression with Application to Safe Control}
\author{%
	Armin Lederer \\
	Technical University of Munich\\
	\texttt{armin.lederer@tum.de}
	\And
	Jonas Umlauft \\
	Technical University of Munich\\
	\texttt{jonas.umlauft@tum.de}
	\And
	Sandra Hirche\\
	Technical University of Munich\\
	\texttt{hirche@tum.de}
}
\begin{document}
	
	\maketitle
	
	\begin{abstract}
		Data-driven models are subject to model errors due to limited and noisy 
		training data. Key to the application of such models in safety-critical 
		domains is the quantification of their model error. Gaussian processes 
		provide such a measure and uniform error bounds have been derived, which 
		allow safe control based on these models. However, existing error bounds 
		require restrictive assumptions. In this paper, we employ the Gaussian 
		process distribution and continuity arguments to derive a novel uniform 
		error bound under weaker assumptions. Furthermore, we demonstrate how this 
		distribution can be used to derive probabilistic Lipschitz constants and 
		analyze the asymptotic behavior of our bound. Finally, we derive safety 
		conditions for the control of unknown dynamical systems based on Gaussian 
		process models and evaluate them in simulations
		of a robotic manipulator.\looseness=-1
	\end{abstract}
	
	\section{Introduction}
	The application of machine learning techniques in control tasks bears 
	significant promises. The identification of highly nonlinear systems through 
	supervised learning techniques~\cite{Norgard2000} and the automated 
	policy search in reinforcement learning~\cite{Deisenroth2011a} enables the 
	control of complex unknown systems. Nevertheless, the application in 
	safety-critical domains, like autonomous driving, robotics or aviation is rare. 
	Even though the data-efficiency and performance of 
	self-learning controllers is impressive, engineers still hesitate to rely on 
	learning approaches if the physical integrity of systems is at risk, in 
	particular, if humans are involved. Empirical 
	evaluations, e.g. for autonomous driving~\cite{Huval2015}, are 
	available, however, this might not be sufficient to reach the desired level of 
	reliability and autonomy.
	
	Limited and noisy training data lead to imperfections in 
	data-driven models~\cite{Umlauft2017b}. This makes the quantification of the uncertainty in 
	the 
	model and the knowledge about a model's ignorance key for the utilization of 
	learning approaches in safety-critical applications. Gaussian process models 
	provide this measure for their own imprecision and therefore gained 
	attention in the control 
	community~\cite{Beckers2019,Berkenkamp2016a,Fanger2016}. 
	These approaches heavily rely on error bounds of Gaussian process regression 
	and are therefore limited by the strict assumptions made in previous works on 
	GP uniform error bounds~\cite{Srinivas2012,Chowdhury2017a,Umlauft2018,Umlauft2018a}.\looseness=-1
	
	The main contribution of this paper is therefore the derivation of a novel 
	GP uniform error bound, which requires less prior knowledge and 
	assumptions than previous approaches and is therefore applicable to a wider 
	range of problems. Furthermore, we derive a Lipschitz constant for the samples 
	of GPs and investigate the asymptotic behavior in order to demonstrate that
	arbitrarily small error bounds can be guaranteed with sufficient computational
	resources and data. The proposed GP bounds are employed to derive safety 
	guarantees for unknown dynamical systems which are controlled based on a GP 
	model. By employing Lyapunov theory~\cite{Khalil2002}, we prove that the 
	closed-loop system - here we take a robotic manipulator as example - converges 
	to a small fraction of the state space and can therefore be considered as 
	safe.\looseness=-1
	
	The remainder of this paper is structured as follows: We briefly introduce 
	Gaussian process regression and discuss related error bounds in 
	\cref{sec:background}. The novel proposed GP uniform error bound, the probabilistic Lipschitz 
	constant and the asymptotic analysis are presented in \cref{sec:errorbound}. In 
	\cref{sec:safety} we show safety of a GP model based controller and  
	evaluate it on a robotic manipulator in \cref{sec:numeval}.
	
	\section{Background}
	\label{sec:background}
	
	\subsection{Gaussian Process Regression and Uniform Error Bounds}
	
	Gaussian process regression is a Bayesian machine learning method based
	on the assumption that any finite collection of random variables\footnote{Notation: 
		Lower/upper case bold symbols denote vectors/matrices and 
		$\Rset_{+}$/$\Rset_{+,0}$ all real positive/non-negative numbers. 
		$\Nset$ denotes all natural numbers,
		$\bm{I}_n$ the $n\times n$ identity matrix, 
		the dot in~$\dot{x}$ the derivative of~$x$ with respect to time and 
		$\|\cdot\|$ the Euclidean norm. A function $f(\bm{x})$ is said to admit a 
		modulus of continuity $\omega:\Rset_+\rightarrow\Rset_+$ 
		if and only if $|f(\bm{x})-f(\bm{x}')|\leq\omega(\|\bm{x}-\bm{x}'\|)$. 
		The $\tau$-covering number $M(\tau,\Xset)$ of a set 
		$\Xset$ (with respect to the Euclidean metric) is defined as 
		the minimum number of spherical balls with radius $\tau$ which is 
		required to completely cover $\Xset$. Big $\mathcal{O}$ notation is 
		used to describe the asymptotic behavior of functions.
	} 
	$y_i\in\Rset$
	follows a joint Gaussian distribution with prior mean~$0$ and covariance 
	kernel $k:\Rset^d\times\Rset^d\rightarrow\Rset_+$ \cite{Rasmussen2006}. 
	Therefore, the variables $y_i$ are observations of a sample function 
	$f:\Xset\subset\Rset^d\rightarrow\Rset$ of the GP distribution 
	perturbed by zero mean Gaussian noise with variance $\sigon\in \Rset_{+,0}$. By 
	concatenating $N$ input data points $\x_i$ in a matrix $\bm{X}_N$ the 
	elements of the GP kernel matrix $\bm{K}(\bm{X}_N,\bm{X}_N)$ are defined as 
	$K_{ij}=k(\x_i,\x_j)$,~$i,j=1,\ldots,N$ and $\bm{k}(\bm{X}_N,\x)$ denotes the 
	kernel vector, which is defined 
	accordingly. 
	The probability distribution of the GP at a point $\x$ conditioned on the 
	training data concatenated in $\bm{X}_N$ and $\bm{y}_N$ is then given as a normal 
	distribution with mean $\nu_N(\x)=\bm{k}(\x,\bm{X}_N)
	(\bm{K}(\bm{X}_N,\bm{X}_N)+\sigon\bm{I}_N)^{-1}\bm{y}_N$ and 
	variance $\sigma_N^2(\x,\x')=k(\x,\x')-\bm{k}(\x,\bm{X}_N)
	(\bm{K}(\bm{X}_N,\bm{X}_N)+\sigon\bm{I}_N)^{-1}\bm{k}(\bm{X}_N,\x')$.
	
	A major reason for the popularity of GPs and related approaches in safety critical
	applications is the existence of uniform error bounds for the regression error, 
	which is defined as follows.
	\begin{definition}
		Gaussian process regression exhibits a uniformly bounded error 
		on a compact set~$\Xset\subset\mathbb{R}^d$ if there exists 
		a function $\eta(\bm{x})$ such that 
		\begin{align}
		|\nu_N(\bm{x})-f(\bm{x})|\leq \eta(\bm{x})\quad\forall\bm{x}\in\Xset.
		\end{align}
		If this bound holds with probability of at least $1-\delta$ for some 
		$\delta\in (0,1)$, it is called a probabilistic uniform error bound.
	\end{definition}
	
	\subsection{Related Work}
	
	For many methods closely related to Gaussian process regression, uniform error 
	bounds are very common. When dealing with noise-free data, i.e. in 
	interpolation of multivariate functions, results from the field of scattered 
	data approximation with radial basis functions can be applied 
	\cite{Wendland2005}. In fact, many of the results from interpolation with 
	radial basis functions can be directly applied to noise-free GP regression with 
	stationary kernels. The classical result in \cite{Wu1993} employs Fourier 
	transform methods to derive an error bound for functions in the reproducing 
	kernel Hilbert space (RKHS) attached to the interpolation kernel. By 
	additionally exploiting properties of the RKHS a uniform error bound with 
	increased convergence rate is derived in \cite{Schaback2002}. Typically, this 
	form of bound crucially depends on the so called power function, 
	which corresponds to the posterior standard deviation of Gaussian process 
	regression under certain conditions \cite{Kanagawa2018}. In \cite{Hubbert2004}, 
	a $\mathcal{L}_p$ error bound for data distributed on a sphere is developed, 
	while the bound in \cite{Narcowich2006} extends existing approaches to 
	functions from Sobolev spaces. Bounds for anisotropic kernels and the 
	derivatives of the interpolant are developed in \cite{Beatson2010}. A Sobolev 
	type error bound for interpolation with Mat\'ern kernels is derived in 
	\cite{Stuart2018}. Moreover, it is shown that convergence of the interpolation 
	error implies convergence of the GP posterior variance.\looseness=-1
	
	Regularized kernel regression is a method which extends many ideas from 
	scattered data interpolation to noisy observations and it is highly related to 
	Gaussian process regression as pointed out in \cite{Kanagawa2018}. In fact, the 
	GP posterior mean function is identical to kernel ridge regression with squared 
	cost function~\cite{Rasmussen2006}. Many error bounds such as 
	\cite{Mendelson2002} depend on the empirical $\mathcal{L}_2$ covering number 
	and the norm of the unknown function in the RKHS attached to the regression 
	kernel. In \cite{Zhang2005}, the effective dimension of the feature space, in 
	which regression is performed, is employed to derive a probabilistic uniform 
	error bound. 
	The effect of approximations of the kernel, e.g. with the Nystr\"om method, on 
	the regression error is analyzed in \cite{Cortes2010}. Tight error bounds using 
	empirical $\mathcal{L}_2$ covering numbers are derived under mild assumptions 
	in \cite{Shi2013}. Finally, error bounds for general regularization are 
	developed in \cite{Dicker2017}, which depend on regularization and the RKHS 
	norm of the function. 
	
	Using similar RKHS-based methods for Gaussian process regression, probabilistic 
	uniform error bounds depending on the maximal information gain and the RKHS norm 
	have been developed in~\cite{Srinivas2012}. These constants pose a high hurdle 
	which has prevented the rigorous application of this work in control 
	and typically heuristic constants without theoretical foundations are applied, 
	see e.g.~\cite{Berkenkamp2017a}. While regularized kernel regression allows a 
	wide range of observation noise distributions, the bound in \cite{Srinivas2012} 
	only holds for bounded sub-Gaussian noise. Based on this work an improved bound 
	is derived in \cite{Chowdhury2017a} in order to analyze the regret of an upper 
	confidence bound algorithm in multi-armed bandit problems. Although these 
	bounds are frequently used in safe reinforcement learning and control, they 
	suffer from several issues. On the one hand, they depend on constants which are 
	very 
	difficult to calculate. While this is no problem for theoretical analysis, it 
	prohibits the integration of these bounds into algorithms and often estimates 
	of the constants must be used. On the other hand, they suffer from the general 
	problem of RKHS approaches: The space of functions, for which the bounds 
	hold, becomes smaller the smoother the kernel is \cite{Narcowich2006}. In fact, 
	the RKHS attached to a covariance kernel is usually small compared to the 
	support 
	of the prior distribution of a Gaussian process \cite{VanderVaart2011}.
	
	The latter issue has been addressed by considering the support of the prior 
	distribution of the Gaussian process as belief space. Based on bounds for the 
	suprema of GPs \cite{Adler2007} and existing error bounds for interpolation 
	with radial basis functions, a probabilistic uniform error bound for Kriging 
	(alternative term for GP regression for noise-free training data) is derived in 
	\cite{Wang2019}. However, the uniform error of Gaussian process regression with 
	noisy observations has not been analyzed with the help of the prior GP 
	distribution to the best of our knowledge.
	
	\section{Probabilistic Uniform Error Bound}
	\label{sec:errorbound}
	While probabilistic uniform error bounds for the cases of noise-free 
	observations 
	and the restriction to subspaces of a RKHS are widely used, they often rely on 
	constants which are hard to determine and are typically limited to 
	unnecessarily small function spaces. The inherent probability distribution of 
	GPs, which is the largest possible function space for regression with a certain 
	GP, has not been exploited to derive uniform error bounds for Gaussian process 
	regression with noisy observations. Under the weak assumption of Lipschitz 
	continuity of the covariance kernel and the unknown function, a directly 
	computable probabilistic uniform error bound is derived in \cref{subsec:regerror}. 
	We demonstrate how Lipschitz constants for unknown functions directly follow from 
	the assumed distribution over the function space in \cref{subsec:prob 
		Lipschitz}. Finally, we show that an arbitrarily small error bound can be 
	reached with sufficiently many and well-distributed training data in 
	\cref{subsec:asymptotics}.

	\subsection{Exploiting Lipschitz Continuity of the Unknown Function}
	\label{subsec:regerror}
	
	In contrast to the RKHS based approaches in \cite{Srinivas2012,Chowdhury2017a}, we make use of 
	the inherent probability distribution over the function 
	space defined by Gaussian processes. We achieve this 
	through the following assumption.\looseness=-1
	\begin{assumption}
		\label{ass:samplefun}
		The unknown function $f(\cdot)$ is a sample from a 
		Gaussian process $\mathcal{GP}(0,k(\bm{x},\bm{x}'))$ 
		and observations $y=f(\x)+\epsilon$ are perturbed 
		by zero mean i.i.d. Gaussian noise $\epsilon$ with variance 
		$\sigon$.
	\end{assumption}
	This assumption includes abundant information about the regression problem. The 
	space of sample functions $\mathcal{F}$ is limited through the choice of the 
	kernel $k(\cdot,\cdot)$ of the Gaussian process. Using Mercer's decomposition 
	\cite{Mercer1909} $\phi_i(\bm{x})$, $i=1,\ldots,\infty$ of the kernel 
	$k(\cdot,\cdot)$, this space is defined through
	\begin{align}
	\mathcal{F}=\left\{ f(\x):~\exists \lambda_i,i=1,\ldots,\infty \text{ such that 
	} 
	f(\x)=\sum\limits_{i=1}^{\infty}\lambda_i\phi_i(\x) \right\},
	\end{align}
	which contains all functions that can be represented in terms of the kernel 
	$k(\cdot,\cdot)$. By choosing a suitable class of covariance functions 
	$k(\cdot,\cdot)$, this space can be designed in order to incorporate prior 
	knowledge of the unknown function $f(\cdot)$. For example, for covariance 
	kernels $k(\cdot,\cdot)$ which are universal in the sense of 
	\cite{Steinwart2001}, continuous functions can be learned with arbitrary 
	precision. Moreover, for the squared exponential kernel, the space of sample 
	functions corresponds to the space of continuous functions on $\Xset$, while 
	its RKHS is limited to analytic functions \cite{VanderVaart2011}. 
	Furthermore, \cref{ass:samplefun} defines a prior GP distribution over the 
	sample space $\mathcal{F}$ which is the basis for the calculation of the 
	posterior probability. The prior distribution is typically shaped by the 
	hyperparameters of the covariance kernel $k(\cdot,\cdot)$, e.g. slowly varying 
	functions can be assigned a higher probability than functions with high 
	derivatives. 
	Finally, \cref{ass:samplefun} allows Gaussian observation noise which is in contrast to the bounded noise required e.g. in \cite{Srinivas2012, Chowdhury2017a}.
	
	In addition to \cref{ass:samplefun}, we need Lipschitz continuity of the kernel 
	$k(\cdot,\cdot)$ and the unknown function $f(\cdot)$. We define the Lipschitz 
	constant of a differentiable covariance kernel $k(\cdot,\cdot)$ as
	\begin{align}
	L_k&\coloneqq\max\limits_{\x,\x'\in\Xset}\left\| \begin{bmatrix}
	\frac{\partial k(\x,\x')}{\partial x_1}&\ldots&
	\frac{\partial k(\x,\x')}{\partial x_d}
	\end{bmatrix}^T \right\|.
	\end{align}
	Since most of the practically used covariance kernels $k(\cdot,\cdot)$, such as 
	squared exponential and Mat{\'e}rn  kernels, are Lipschitz continuous 
	\cite{Rasmussen2006}, this is a weak restriction on covariance kernels. 
	However, it allows us to prove continuity of the posterior mean function 
	$\nu_N(\cdot)$ and the posterior standard deviation $\sigma_N(\cdot)$, which is 
	exploited to derive a probabilistic uniform error bound in the following theorem. 
	The proofs for all following theorems can be found in the supplementary 
	material.
	\begin{theorem}
		\label{th:errbound_with}
		Consider a zero mean Gaussian process defined through the continuous
		covariance kernel $k(\cdot,\cdot)$ with Lipschitz constant $L_k$ 
		on the compact set $\Xset$. Furthermore, consider a continuous unknown 
		function $f:\Xset\rightarrow \Rset$ with Lipschitz constant $L_f$ 
		and $N\in\mathbb{N}$ observations $y_i$ satisfying \cref{ass:samplefun}. Then, the 
		posterior mean function~$\nu_{N}(\cdot)$ and standard deviation 
		$\sigma_N(\cdot)$ 
		of a Gaussian process conditioned on the training data $\{(\bm{x}_i,y_i)\}_{i=1}^N$
		are continuous with Lipschitz constant $L_{\nu_N}$ and modulus of continuity
		$\omega_{\sigma_N}(\cdot) $ on $\Xset$ such that
		\begin{align}
		\label{eq:L_nu}
		L_{\nu_N}&\leq L_k\sqrt{N} 
		\left\| (\bm{K}(\bm{X}_N,\bm{X}_N)+\sigon\bm{I}_N)^{-1}\bm{y}_N \right\|\\
		\omega_{\sigma_N}(\tau)&\leq\sqrt{2\tau L_k\left(1+N
			\|(\bm{K}(\bm{X}_N,\bm{X}_N)+\sigon\bm{I}_N)^{-1}\|
			\max\limits_{\x,\x'\in\Xset}k(\x,\x')\right)}.
		\label{eq:omega_sigma}
		\end{align}
		Moreover, pick $\delta\in (0,1)$, $\tau\in\Rset_+$ and set 
		\begin{align}
		\label{eq:beta}
		\beta(\tau)&=2\log\left(\frac{M(\tau,\Xset)}{\delta}\right)\\
		\gamma(\tau)&=\left( L_{\nu_N}+L_f\right)\tau+\sqrt{\beta(\tau)}\omega_{\sigma_N}(\tau).
		\label{eq:gamma}
		\end{align}
		Then, it holds that
		\begin{align}
		\label{eq:errorbound}
		P\left(|f(\x)-\nu_{N}(\x)|\leq 
		\sqrt{\beta(\tau)}\sigma_{N}(\x)+\gamma(\tau), 
		~\forall\x\in\Xset\right)\geq 1-\delta.
		\end{align}
	\end{theorem}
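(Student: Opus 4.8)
The plan is to prove the theorem in two stages: first the two continuity estimates~\eqref{eq:L_nu} and~\eqref{eq:omega_sigma}, and then the probabilistic bound~\eqref{eq:errorbound}, which uses those estimates to interpolate a finite collection of pointwise Gaussian bounds across the whole set $\Xset$.

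For the mean, I would write $\nu_N(\x)=\sum_{i=1}^N\alpha_i k(\x,\x_i)$ with the fixed weight vector $\bm{\alpha}=(\bm{K}(\bm{X}_N,\bm{X}_N)+\sigon\bm{I}_N)^{-1}\bm{y}_N$, so that $\nabla_\x\nu_N(\x)=\sum_i\alpha_i\nabla_\x k(\x,\x_i)$. The definition of $L_k$ bounds each gradient norm by $L_k$, giving $\|\nabla_\x\nu_N(\x)\|\le L_k\sum_i|\alpha_i|=L_k\|\bm{\alpha}\|_1$, and Cauchy--Schwarz turns $\|\bm{\alpha}\|_1\le\sqrt{N}\|\bm{\alpha}\|_2$ into~\eqref{eq:L_nu}. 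For the standard deviation I would bound the squared quantity first and then invoke $|\sqrt{a}-\sqrt{b}|\le\sqrt{|a-b|}$ for $a,b\ge 0$. Writing $\bm{M}=(\bm{K}(\bm{X}_N,\bm{X}_N)+\sigon\bm{I}_N)^{-1}$, $\bm{u}=\bm{k}(\bm{X}_N,\x)$ and $\bm{v}=\bm{k}(\bm{X}_N,\x')$, the symmetry of $\bm{M}$ gives the identity $\bm{u}\T\bm{M}\bm{u}-\bm{v}\T\bm{M}\bm{v}=(\bm{u}-\bm{v})\T\bm{M}(\bm{u}+\bm{v})$, which I bound by $\|\bm{u}-\bm{v}\|\,\|\bm{M}\|\,\|\bm{u}+\bm{v}\|$. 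Componentwise Lipschitz continuity of the kernel yields $\|\bm{u}-\bm{v}\|\le\sqrt{N}L_k\|\x-\x'\|$ and $\|\bm{u}+\bm{v}\|\le 2\sqrt{N}\max_{\x,\x'}k(\x,\x')$, while the diagonal term obeys $|k(\x,\x)-k(\x',\x')|\le 2L_k\|\x-\x'\|$. Adding these gives $|\sigma_N^2(\x)-\sigma_N^2(\x')|\le 2L_k\|\x-\x'\|(1+N\|\bm{M}\|\max_{\x,\x'}k(\x,\x'))$, and the square-root inequality delivers~\eqref{eq:omega_sigma}.

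The conceptual key to~\eqref{eq:errorbound} --- and the reason weaker assumptions suffice than in the RKHS analyses of~\cite{Srinivas2012,Chowdhury2017a} --- is that under \cref{ass:samplefun} the posterior at any fixed point is exactly Gaussian, so conditioned on the data $f(\x)-\nu_N(\x)\sim\mathcal{N}(0,\sigma_N^2(\x))$. The Gaussian tail bound $P(|Z|>c)\le e^{-c^2/2}$ with $c=\sqrt{\beta(\tau)}$ then gives a single-point failure probability of at most $e^{-\beta(\tau)/2}=\delta/M(\tau,\Xset)$ by the choice~\eqref{eq:beta}. I would take a minimal $\tau$-cover of $\Xset$ with $M(\tau,\Xset)$ centers $\{\x_j\}$ and apply a union bound, so that with probability at least $1-\delta$ the inequality $|f(\x_j)-\nu_N(\x_j)|\le\sqrt{\beta(\tau)}\sigma_N(\x_j)$ holds simultaneously at every center.

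The final step, which I expect to be the main obstacle, is passing from this finite grid to all of $\Xset$. For arbitrary $\x$ I would choose a center $\x_j$ with $\|\x-\x_j\|\le\tau$ and split $|f(\x)-\nu_N(\x)|$ via the triangle inequality into $|f(\x)-f(\x_j)|$, $|f(\x_j)-\nu_N(\x_j)|$ and $|\nu_N(\x_j)-\nu_N(\x)|$. Lipschitz continuity of $f$ controls the first by $L_f\tau$, the mean estimate controls the third by $L_{\nu_N}\tau$, and the grid event controls the middle by $\sqrt{\beta(\tau)}\sigma_N(\x_j)$; finally $\sigma_N(\x_j)\le\sigma_N(\x)+\omega_{\sigma_N}(\tau)$ transports the bound back to $\x$, reproducing $\sqrt{\beta(\tau)}\sigma_N(\x)+\gamma(\tau)$ exactly with $\gamma$ as in~\eqref{eq:gamma}. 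The delicate parts are making sure the continuity estimates of the first stage are uniform enough to absorb the discretization error at every $\x$, and correctly reading the probability as a statement over the random sample $f$ with the data --- and hence $\nu_N$, $\sigma_N$, $\beta$ and $\gamma$ --- held fixed, so that the union bound over the finitely many Gaussian events is legitimate.
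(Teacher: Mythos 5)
Your proposal is correct and takes essentially the same route as the paper's proof: the same Cauchy--Schwarz/Lipschitz estimates for $L_{\nu_N}$ and $\omega_{\sigma_N}$ (including the reduction $|\sigma_N(\x)-\sigma_N(\x')|^2\leq|\sigma_N^2(\x)-\sigma_N^2(\x')|$ to the variance difference), followed by the identical grid argument --- pointwise Gaussian tail bound with $\beta(\tau)=2\log(M(\tau,\Xset)/\delta)$, union bound over a minimal $\tau$-cover, and triangle-inequality extension using $L_f\tau$, $L_{\nu_N}\tau$ and $\sqrt{\beta(\tau)}\omega_{\sigma_N}(\tau)$. The only cosmetic differences are that you bound the mean via its gradient rather than a finite difference and derive the on-grid bound from Gaussian tails directly where the paper cites \cite{Srinivas2012}.
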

	The parameter $\tau$ is in fact the grid constant of a grid used 
	in the derivation of the theorem. The error on the grid can be bounded
	by exploiting properties of the Gaussian distribution \cite{Srinivas2012} 
	resulting in a dependency on the number of grid points. Eventually, this leads to 
	the constant $\beta(\tau)$ defined in \eqref{eq:beta} since the covering number 
	$M(\tau,\Xset)$ 
	is the minimum number of points in a grid over $\Xset$ with grid constant 
	$\tau$. By employing the Lipschitz constant $L_{\nu_N}$ and the modulus of 
	continuity $\omega_{\sigma_N}(\cdot)$, which are trivially obtained due 
	Lipschitz continuity of the covariance kernel $k(\cdot,\cdot)$, as well as the 
	Lipschitz constant $L_f$, the error bound is extended to the complete set 
	$\Xset$, which results in \eqref{eq:errorbound}.
	
	Note, that most of the equations in \cref{th:errbound_with} can be directly 
	evaluated. Although our expression for~$\beta(\tau)$ depends on the covering 
	number of $\Xset$, which is in general difficult to calculate, upper bounds can 
	be computed trivially. For example, for a hypercubic set 
	$\Xset\subset\Rset^d$ the covering number can be bounded by
	\begin{align}
	M(\tau,\Xset)\leq \left(1+\frac{r}{\tau}\right)^d,
	\end{align}
	where $r$ is the edge length of the hypercube. Furthermore, 
	\eqref{eq:L_nu} and \eqref{eq:omega_sigma} depend only on the training data and 
	kernel expressions, which can be calculated analytically in general. Therefore, 
	\eqref{eq:errorbound} can be computed for fixed $\tau$ and $\delta$ if an upper 
	bound for the Lipschitz constant $L_f$ of the unknown function $f(\cdot)$ is 
	known. Prior bounds on the Lipschitz constant $L_f$ are often available for 
	control systems, e.g. based on simplified first order physical models. However, 
	we demonstrate a method to obtain probabilistic Lipschitz constants from 
	\cref{ass:samplefun} in \cref{subsec:prob Lipschitz}. Therefore, it is trivial 
	to compute all expressions in \cref{th:errbound_with} or upper bounds thereof, 
	which emphasizes the high applicability of \cref{th:errbound_with} in safe 
	control of unknown systems.
	
	Moreover, it should be noted that $\tau$ can be chosen arbitrarily small such 
	that the effect of the constant~$\gamma(\tau)$ can always be reduced to an 
	amount which is negligible compared to $\sqrt{\beta(\tau)}\sigma_N(\x)$. Even 
	conservative approximations of the Lipschitz constants $L_{\nu_N}$ and $L_f$ 
	and a loose modulus of continuity $\omega_{\sigma_N}(\tau)$ do not affect 
	the error bound \eqref{eq:errorbound} much since \eqref{eq:beta} grows merely 
	logarithmically with diminishing $\tau$. In fact, even the bounds \eqref{eq:L_nu}
	and \eqref{eq:omega_sigma}, which grow in the order of $\mathcal{O}(N)$
	and $\mathcal{O}(N^{\frac{1}{2}})$, respectively, as shown in the proof of 
	Theorem~\ref{thm:asym_behavior} and thus are unbounded, can be compensated such 
	that a vanishing uniform error bound can be proven under weak assumptions in 
	\cref{subsec:asymptotics}.\looseness=-1
	
	\subsection{Probabilistic Lipschitz Constants for Gaussian Processes}
	\label{subsec:prob Lipschitz}
	
	If little prior knowledge of the unknown function $f(\cdot)$ is given, it might 
	not be possible to directly derive a Lipschitz constant $L_f$ on $\Xset$. 
	However, we indirectly assume a certain distribution of the derivatives of 
	$f(\cdot)$ with \cref{ass:samplefun}. Therefore, it is possible to derive a 
	probabilistic Lipschitz constant $L_f$ from this assumption, which is described 
	in the following theorem.
	\begin{theorem}
		\label{th:errbound_without}
		Consider a zero mean Gaussian process defined through the
		covariance kernel $k(\cdot,\cdot)$ 
		with continuous partial derivatives up to the fourth order
		and partial derivative kernels 
		\begin{align}
		k^{\partial i}(\x,\x')&=\frac{\partial^2}{\partial x_i\partial x_i'}
		k(\x,\x')\quad \forall i=1,\ldots, d.
		\end{align}
		Let $L_k^{\partial i}$ denote the Lipschitz constants of the partial 
		derivative kernels $k^{\partial i}(\cdot,\cdot)$ on the set $\Xset$ 
		with maximal extension 
		\mbox{$r=\max_{\x,\x'\in\Xset}\|\x-\x'\|$}. 
		Then, a sample function $f(\cdot)$ of the Gaussian process is almost 
		surely continuous on $\Xset$ and with probability of at least $1-\delta_L$, 
		it holds that
		\begin{align}
		L_f=\left\|\begin{bmatrix}
		\sqrt{2\log\left( \frac{2d}{\delta_L} \right)}
		\max\limits_{\x\in\Xset}
		\sqrt{k^{\partial 1}(\x,\x)}+12\sqrt{6d}
		\max\left\{\max\limits_{\x\in\Xset}
		\sqrt{k^{\partial 1}(\x,\x)},\sqrt{rL_k^{\partial 1}}\right\}\\
		\vdots\\
		\sqrt{2\log\left( \frac{2d}{\delta_L} \right)}
		\max\limits_{\x\in\Xset}
		\sqrt{k^{\partial d}(\x,\x)}+12\sqrt{6d}
		\max\left\{\max\limits_{\x\in\Xset}
		\sqrt{k^{\partial d}(\x,\x)},\sqrt{rL_k^{\partial d}}\right\}
		\end{bmatrix}\right\|
		\label{eq:Lfbound}
		\end{align}
		is a Lipschitz constant of $f(\cdot)$ on $\Xset$.
	\end{theorem}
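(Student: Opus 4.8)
The plan is to exploit the fact that differentiation is a linear operation, so that the partial derivatives of a sample of a Gaussian process are themselves samples of Gaussian processes, and then to bound the gradient uniformly. First I would argue that under the assumed fourth-order differentiability of $k(\cdot,\cdot)$, each partial derivative process $\partial f/\partial x_i$ exists in the mean-square sense and is a zero mean Gaussian process with covariance kernel $k^{\partial i}(\cdot,\cdot)$; the fourth-order smoothness is precisely what guarantees that this derivative process has a well-defined second-order structure and a version with continuous sample paths. Almost sure continuity of $f(\cdot)$ follows, since the existence of continuous derivative processes renders $f(\cdot)$ almost surely continuously differentiable on the compact set $\Xset$.

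The core of the argument is to bound $\sup_{\x\in\Xset}\abs{\partial f(\x)/\partial x_i}$ for each coordinate $i$ with high probability, since a uniform bound on the Euclidean norm of the gradient yields a Lipschitz constant. For this I would use the decomposition $\sup_\x \partial f(\x)/\partial x_i = \mathbb{E}\!\left[\sup_\x \partial f(\x)/\partial x_i\right] + \bigl(\sup_\x \partial f(\x)/\partial x_i - \mathbb{E}[\,\cdot\,]\bigr)$. The expected supremum is bounded via metric entropy: the canonical pseudometric of the derivative process satisfies $\mathbb{E}[(\partial f(\x)/\partial x_i - \partial f(\x')/\partial x_i)^2] = k^{\partial i}(\x,\x) - 2k^{\partial i}(\x,\x') + k^{\partial i}(\x',\x') \leq L_k^{\partial i}\norm{\x-\x'}$ by the Lipschitz property of the derivative kernel. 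Feeding the covering number of $\Xset$ under this pseudometric, whose diameter is controlled by $\sqrt{r L_k^{\partial i}}$, into a standard supremum bound for Gaussian processes produces the term $12\sqrt{6d}\max\{\max_\x \sqrt{k^{\partial i}(\x,\x)},\sqrt{rL_k^{\partial i}}\}$ in \eqref{eq:Lfbound}.

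The deviation of the supremum from its mean is then controlled by Gaussian concentration of measure (the Borell--TIS inequality): it has sub-Gaussian tails with variance proxy $\sigma_i^2 = \max_\x k^{\partial i}(\x,\x)$, so that with probability at least $1-\delta_L/(2d)$ the deviation is at most $\sigma_i\sqrt{2\log(2d/\delta_L)}$, yielding the factor $\sqrt{2\log(2d/\delta_L)}\max_\x\sqrt{k^{\partial i}(\x,\x)}$. A union bound over the $d$ coordinates and the two directions (positive and negative, hence the $2d$), each at failure level $\delta_L/(2d)$, then simultaneously bounds every $\abs{\partial f(\x)/\partial x_i}$ by the corresponding component of the vector in \eqref{eq:Lfbound} with total probability at least $1-\delta_L$.

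Finally, I would close the argument via the mean value theorem: along the segment joining any $\x,\x'\in\Xset$, the fundamental theorem of calculus gives $\abs{f(\x)-f(\x')} \leq \sup_{\bm{z}\in\Xset}\norm{\nabla f(\bm{z})}\,\norm{\x-\x'}$, and the component-wise bounds assemble into $\norm{\nabla f(\bm{z})}\leq L_f$ with $L_f$ as defined in \eqref{eq:Lfbound}, establishing $L_f$ as a Lipschitz constant. The main obstacle is the expected-supremum estimate with the explicit constant $12\sqrt{6d}$: obtaining it requires carefully bounding the covering number of $\Xset$ under the canonical pseudometric of the derivative process and selecting the appropriate Gaussian-supremum lemma, while correctly handling the $\max\{\cdot,\cdot\}$ structure that arises according to whether the pointwise standard deviation or the pseudometric diameter dominates.
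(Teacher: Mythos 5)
Your proposal follows essentially the same route as the paper's proof: derivative Gaussian processes (with kernels $k^{\partial i}$) obtained from the smoothness of $k(\cdot,\cdot)$, Dudley's metric-entropy criterion for the expected supremum with the covering number under the canonical pseudometric transferred to the Euclidean metric via the Lipschitz property of $k^{\partial i}$, the Borell--TIS inequality for the deviation of the supremum from its mean, a union bound over the $2d$ one-sided events, and assembly of the component-wise gradient bounds into $L_f$. The only slip is a factor of two in your pseudometric estimate --- the correct bound is $d_{k^{\partial i}}^2(\x,\x')\leq 2L_k^{\partial i}\norm{\x-\x'}$, matching the paper's $\psi(\varrho')\leq\sqrt{2L_k\varrho'}$ --- which is harmlessly absorbed into the constant bookkeeping you already flag as the delicate part.
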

	Note that a higher differentiability of the covariance kernel $k(\cdot,\cdot)$ 
	is required compared to \cref{th:errbound_with}. The reason for this is that 
	the proof of \cref{th:errbound_without} exploits the fact that the partial 
	derivative $k^{\partial i}(\cdot,\cdot)$ of a differentiable kernel is again a 
	covariance function, which defines a derivative Gaussian 
	process~\cite{Ghosal2006}. In order to obtain continuity of the samples of 
	these 
	derivative processes, the derivative kernels $k^{\partial i}(\cdot,\cdot)$ must 
	be continuously differentiable \cite{Dudley1967}. Using the metric 
	entropy criterion \cite{Dudley1967} and the Borell-TIS inequality 
	\cite{Talagrand1994}, we exploit the continuity of sample functions and 
	bound their maximum value, which directly translates into the probabilistic
	Lipschitz constant \eqref{eq:Lfbound}.
	
	Note that all the values required in \eqref{eq:Lfbound} can be 
	directly computed. The maximum of the derivative kernels $k^{\partial 
		i}(\cdot,\cdot)$ as well as their Lipschitz constants $L_k^{\partial i}$ can be 
	calculated analytically for many kernels. Therefore, the Lipschitz constant 
	obtained with \cref{th:errbound_without} can be directly used in 
	\cref{th:errbound_with} through application of the union bound. Since the 
	Lipschitz constant $L_f$ has only a logarithmic dependence on the probability 
	$\delta_L$, small error probabilities for the Lipschitz constant can easily be 
	achieved.\looseness=-1
	\begin{remark}
		The work in~\cite{Gonzalez2016} derives also estimates for the 
		Lipschitz constants. However, they only take the Lipschitz constant of the 
		posterior mean function, which neglects the probabilistic nature of the GP 
		and thereby underestimates the Lipschitz constants of samples of the GP.
	\end{remark}
	\subsection{Analysis of Asymptotic Behavior}
	\label{subsec:asymptotics}
	
	In safe reinforcement learning and control of unknown systems an important 
	question regards the existence of lower bounds for the learning error because 
	they limit the achievable control performance. It is clear that the available 
	data and constraints on the computational resources pose such lower bounds in 
	practice. However, it is not clear under which conditions, e.g. requirements of 
	computational power, an arbitrarily low uniform error can be guaranteed. The 
	asymptotic analysis of the error bound, i.e. investigation of the bound 
	\eqref{eq:errorbound} in the limit $N\rightarrow\infty$ can clarify this 
	question. The following theorem is the result of this analysis.
	\begin{theorem}
		\label{thm:asym_behavior}
		Consider a zero mean Gaussian process defined through the continuous
		covariance kernel $k(\cdot,\cdot)$ with Lipschitz constant $L_k$ 
		on the set $\Xset$. Furthermore, consider an infinite data stream of 
		observations~$(\x_i,y_i)$ of an unknown function~\mbox{$f:\Xset\rightarrow 
			\Rset$} 
		with Lipschitz constant $L_f$ and maximum absolute value 
		$\bar{f}\in\Rset_{+}$ on 
		$\Xset$ which satisfies \cref{ass:samplefun}.
		Let $\nu_N(\cdot)$ and $\sigma_N(\cdot)$ denote the mean and standard deviation 
		of the Gaussian process conditioned on the first $N$ observations. If there 
		exists a~$\epsilon>0$ such that the standard deviation satisfies
		$\sigma_N(\x)\in\mathcal{O}\left(\log(N)^{-\frac{1}{2}-\epsilon}\right)$, 
		$\forall\x\in\Xset$,
		then it holds for every $\delta\in(0,1)$ that 
		\begin{align}
		P\left(\sup_{\bm{x}\in\Xset}\|\nu_N(\bm{x})-f(\bm{x})\|\in 
		\mathcal{O}(\log(N)^{-\epsilon})\right)\geq 1-\delta.
		\label{eq:as_bound}
		\end{align}
	\end{theorem}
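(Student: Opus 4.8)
The plan is to invoke the uniform error bound of \cref{th:errbound_with} and track how every quantity in it scales with the sample size~$N$ once the grid constant~$\tau$ and the confidence level are allowed to depend on~$N$. Since the claim \eqref{eq:as_bound} is an asymptotic statement that must hold along the whole sequence $N\to\infty$ on a single event of probability at least $1-\delta$, I would first turn the per-$N$ guarantee of \cref{th:errbound_with} into a simultaneous one. Concretely, I apply \cref{th:errbound_with} for each $N$ with confidence level $\delta_N=6\delta/(\pi^2 N^2)$ and a grid constant $\tau_N$ to be fixed below; because $\sum_{N=1}^\infty\delta_N=\delta$, a union bound over $N$ yields that, with probability at least $1-\delta$, the inequality $|f(\x)-\nu_N(\x)|\le\sqrt{\beta_N(\tau_N)}\,\sigma_N(\x)+\gamma_N(\tau_N)$ holds for all $\x\in\Xset$ and all $N$ simultaneously, where $\beta_N$ and $\gamma_N$ are \eqref{eq:beta} and \eqref{eq:gamma} evaluated at $\delta_N,\tau_N$.

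Next I would choose $\tau_N=N^{-b}$ for any fixed exponent $b>1$ and isolate the dominant contribution. Using the hypercube covering-number bound $M(\tau_N,\Xset)\le(1+r/\tau_N)^d$, one obtains $\log M(\tau_N,\Xset)\in\mathcal{O}(\log N)$, and since $\log(1/\delta_N)\in\mathcal{O}(\log N)$ as well, it follows that $\beta_N(\tau_N)\in\mathcal{O}(\log N)$ and hence $\sqrt{\beta_N(\tau_N)}\in\mathcal{O}(\log(N)^{1/2})$. Combining this with the hypothesis $\sigma_N(\x)\in\mathcal{O}(\log(N)^{-1/2-\epsilon})$ gives $\sqrt{\beta_N(\tau_N)}\,\sigma_N(\x)\in\mathcal{O}(\log(N)^{-\epsilon})$, which is exactly the asserted rate. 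It therefore remains to show that the correction term $\gamma_N(\tau_N)$ decays at least as fast.

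The remaining work is to bound the growth of the three pieces of $\gamma_N(\tau_N)=(L_{\nu_N}+L_f)\tau_N+\sqrt{\beta_N(\tau_N)}\,\omega_{\sigma_N}(\tau_N)$. For the modulus of continuity I use that the smallest eigenvalue of $\bm{K}(\bm{X}_N,\bm{X}_N)+\sigon\bm{I}_N$ is at least $\sigon$, so $\|(\bm{K}(\bm{X}_N,\bm{X}_N)+\sigon\bm{I}_N)^{-1}\|\le\sigon^{-1}$, together with boundedness of $k$ on the compact set $\Xset$ in \eqref{eq:omega_sigma}, which gives $\omega_{\sigma_N}(\tau_N)\in\mathcal{O}(\sqrt{\tau_N N})=\mathcal{O}(N^{(1-b)/2})$; for the Lipschitz constant the same inverse bound together with $\|\bm{y}_N\|\in\mathcal{O}(\sqrt{N})$ reduces \eqref{eq:L_nu} to $L_{\nu_N}\in\mathcal{O}(N)$, so that $(L_{\nu_N}+L_f)\tau_N\in\mathcal{O}(N^{1-b})$. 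Because $b>1$, every term of $\gamma_N(\tau_N)$ decays polynomially in $N$ (up to the harmless $\sqrt{\log N}$ factor), and any polynomial rate $N^{-a}$ with $a>0$ lies in $\mathcal{O}(\log(N)^{-\epsilon})$. Taking the supremum over $\x\in\Xset$ of the simultaneous bound then yields \eqref{eq:as_bound}.

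The main obstacle, and the only genuinely probabilistic step beyond \cref{th:errbound_with}, is controlling $\|\bm{y}_N\|$, which feeds into the $\mathcal{O}(N)$ growth of $L_{\nu_N}$. Writing $y_i=f(\x_i)+\epsilon_i$ with $|f|\le\bar{f}$, I would bound $\|\bm{y}_N\|^2\le 2N\bar{f}^2+2\sum_{i=1}^N\epsilon_i^2$ and invoke the strong law of large numbers for the i.i.d. Gaussian noise, so that $N^{-1}\sum_{i=1}^N\epsilon_i^2\to\sigon$ almost surely and hence $\|\bm{y}_N\|\in\mathcal{O}(\sqrt{N})$ on a probability-one event; intersecting this event with the union-bound event above preserves the overall confidence $1-\delta$. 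Care must also be taken that the $N$-dependent refinement of $\tau$ and $\delta$ does not spoil the logarithmic growth of $\beta_N$, which is precisely why the polynomial schedule $\tau_N=N^{-b}$ with $\delta_N\propto N^{-2}$ is essential: it keeps $\beta_N\in\mathcal{O}(\log N)$ while forcing $\gamma_N$ to vanish polynomially.
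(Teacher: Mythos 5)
Your proposal is correct, and its overall architecture coincides with the paper's proof: a union bound over $N$ with confidence levels $\delta_N\propto N^{-2}$ applied to \cref{th:errbound_with}, the hypercube bound $M(\tau,\Xset)\leq(1+r/\tau)^d$ to keep $\beta_N(\tau_N)\in\mathcal{O}(\log N)$, the eigenvalue bound $\|(\bm{K}(\bm{X}_N,\bm{X}_N)+\sigon\bm{I}_N)^{-1}\|\leq\sigon^{-1}$ feeding into both $\omega_{\sigma_N}$ and $L_{\nu_N}$, and a polynomial schedule $\tau_N=N^{-b}$ with $b>1$ (the paper fixes $b=2$, consistent with its observation that $\tau(N)$ must decay faster than $(N\log N)^{-1}$) so that $\gamma_N(\tau_N)$ vanishes polynomially while $\sqrt{\beta_N(\tau_N)}\,\sigma_N(\x)\in\mathcal{O}(\log(N)^{-\epsilon})$. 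The one genuine deviation is how you control $\|\bm{y}_N\|$: the paper writes $\|\bm{y}_N\|\leq\sqrt{N}\bar{f}+\|\bm{\xi}_N\|$ and applies the concentration inequality of \cite{Laurent2000} to the $\chi^2_N$-distributed quantity $\|\bm{\xi}_N\|^2/\sigon$ with $\eta_N=\log(\pi^2N^2/(3\delta))$, union-bounded over $N$, thereby spending $\delta/2$ of the confidence budget on the noise norm and $\delta/2$ on the error bound itself; you instead invoke the strong law of large numbers, which yields $\|\bm{\xi}_N\|\in\mathcal{O}(\sqrt{N})$ on a probability-one event and lets you allocate the full $\delta$ to the union bound over \cref{th:errbound_with}. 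Your route is more elementary and perfectly adequate here, because the conclusion \eqref{eq:as_bound} is itself only an $\mathcal{O}(\cdot)$ statement, so the sample-dependent (non-explicit) constant hidden in the almost-sure event costs nothing; the paper's concentration route buys explicit, non-asymptotic constants for $L_{\nu_N}$ (and hence for $\gamma_N$) that would be needed for any finite-$N$ quantitative version of the bound, at the price of splitting the confidence level.
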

	In addition to the conditions of \cref{th:errbound_with} the absolute value of 
	the unknown function is required to be bounded by a value $\bar{f}$. This is 
	necessary to bound the Lipschitz constant $L_{\nu_N}$ of the posterior mean 
	function $\nu_N(\cdot)$ in the limit of infinite training data. Even if no such 
	constant is known, it can be derived from properties of the GP under weak 
	conditions similarly to \cref{th:errbound_without}. Based on this restriction,
	it can be shown that the bound of the Lipschitz
	constant $L_{\nu_N}$ grows at most with rate~$\mathcal{O}(N)$ using the 
	triangle 
	inequality and the fact that the squared 
	norm of the observation noise~$\|\bm{\epsilon}\|^2$ follows a $\chi^2_N$ 
	distribution 
	with probabilistically bounded maximum value \cite{Laurent2000}.
	Therefore, we pick $\tau(N)\in\mathcal{O}(N^{-2})$ such that 
	$\gamma(\tau(N))\in\mathcal{O}(N^{-1})$ and $\beta(\tau(N))\in\mathcal{O}(\log(N))$ 
	which implies~\eqref{eq:as_bound}.

	The condition on the convergence rate of the posterior standard deviation 
	$\sigma_N(\cdot)$ in Theorem~\ref{thm:asym_behavior} can be seen as a condition 
	for the 
	distribution of the training data, which depends on the structure of the 
	covariance kernel. In~\cite[Corollary 3.2]{Lederer2019a}, the condition 
	is formulated as follows: Let~$\Bset_{\rho}(\x)$ denote a set of training points
	around~$\x$ with radius~$\rho>0$, then the posterior variance converges to zero 
	if there exists a function~$\rho(N)$ for which~$\rho(N)\leq k(\x,\x)/L_k~ 
	\forall N$,~$\lim_{N\to\infty}\rho(N)=0$ and~$\lim_{N\to\infty}\left| 
	\Bset_{\rho(N)}(\x) \right|=\infty$ holds.
	This is achieved, e.g. if a constant fraction of all samples lies on the 
	point~$\x$. 
	In fact, it is straightforward to derive a similar 
	condition for the uniform error bounds in~\cite{Srinivas2012,Chowdhury2017a}. 
	However, due to their dependence on the maximal information gain, the required 
	decrease rates depend on the covariance kernel $k(\cdot,\cdot)$ and are 
	typically higher. For example, the posterior standard deviation of a Gaussian 
	process 
	with a squared exponential kernel must 
	satisfy~$\sigma_N(\cdot)\in\mathcal{O}\left( \log(N)^{-\frac{d}{2}-2} \right)$ 
	for 
	\cite{Srinivas2012} and $\sigma_N(\cdot)\in\mathcal{O}\left( 
	\log(N)^{-\frac{d+1}{2}} \right)$ for \cite{Chowdhury2017a}.

	\section{Safety Guarantees for Control of Unknown Dynamical Systems}
	\label{sec:safety}
	Safety guarantees for dynamical systems, in terms of upper bounds for the 
	tracking error, are becoming more and more relevant as learning controllers are 
	applied in safety-critical applications like autonomous driving or robots 
	working in close proximity to 
	humans~\cite{Umlauft2017,Umlauft2017a,Umlauft2017b}. We therefore show how the 
	results in \cref{th:errbound_with} can be applied to control safely unknown 
	dynamical systems. In \cref{subsec:track} we propose a tracking control law for 
	systems which are learned with GPs. The stability 
	of the resulting controller is analyzed in \cref{subsec:stab}. 
	
	\subsection{Tracking Control Design}
	\label{subsec:track}
	Consider the nonlinear control affine dynamical system 
	\begin{align}
	\label{eq:sys}
	\dot{x}_1  = x_2,\qquad \dot{x}_2 = f(\x) + u,
	\end{align}
	with state~$\x=[x_1\ x_2]\T \in \Xset \subset \Rset^2$ and control 
	input~$u \in \Uset \subseteq \Rset$. While the structure of the 
	dynamics~\eqref{eq:sys} is known, the function~$f(\cdot)$ is not. However, we 
	assume that it is a sample from a GP with kernel~$k(\cdot,\cdot)$. Systems of 
	the 
	form~\eqref{eq:sys} cover a large range of applications including Lagrangian 
	dynamics and many physical systems. 
	
	The task is to define a policy~$\pi:\Xset \to \Uset$ for which the 
	output~$x_1$ tracks the desired trajectory~$x_d(t)$ such that the tracking 
	error~$\e =[e_1\  e_2]\T=\x-\x_d$ with~$\x_d =[x_d\ \dot{x}_d]\T$  vanishes 
	over 
	time, i.e.~$\lim_{t\to\infty} \norm{\e}= 0$. For notational simplicity, we 
	introduce the filtered state~$r =\lambda e_1 + e_2$, $\lambda\in \Rset_+$.
	
	A well-known method for tracking of control affine systems is 
	feedback linearization~\cite{Khalil2002}, which 
	aims for a 
	model-based compensation of the non-linearity~$f(\cdot)$ using an 
	estimate~$\hat{f}(\cdot)$ and then applies linear control principles for the 
	tracking. The feedback linearizing policy reads as
	\begin{align}
	\label{eq:FeliCtrl}
	u = \pi(\x) = -\fh(\x) +\nu,
	\end{align}
	where the linear control law~$\nu$ is the PD-controller
	\begin{align}
	\label{eq:LinCtrl}
	\nu=\ddot{x}_d  -\kc r - \lambda e_2, 
	\end{align}
	with control gain~$\kc\in \Rset_+$. This results in the dynamics of the 
	filtered state
	\begin{align}
	\dot{r}
	= f(\x)-\fh(\x) -\kc r.
	\end{align} 
	
	Assuming training data of the real system~$y_i = f(\x_i) + \epsilon$, $\alli$, 
	$\epsilon\sim\N(0,\sigon)$ are available, we utilize the posterior mean 
	function~$\nu_N(\cdot)$ for the model estimate~$\fhcd$.
	This implies, that observations of~$\dot{x}_2$ 
	are corrupted by noise, while~$\x$ is measured free of noise. This is of course 
	debatable, but in practice measuring the time derivative is usually realized 
	with finite difference approximations, which 
	injects significantly more noise than a direct measurement.
	
	\subsection{Stability Analysis}
	\label{subsec:stab}
	Due to safety constraints, e.g. for robots interacting with humans,  
	it is usually necessary to verify that the model~$\fhcd$ is sufficiently 
	precise and the parameters of the controller~$\kc,\lambda$ are chosen properly. 
	These safety 
	certificates can be achieved if there exists an upper bound for the tracking 
	error as defined in the following.\looseness=-1
	\begin{definition}[Ultimate Boundedness]
		\label{def:boundedness}
		The trajectory~$\x(t)$ of a dynamical system~$\dot{\x} = \bm{f}(\x,\bm{u})$ 
		is globally ultimately bounded, if there exist a positive constants~$b\in 
		\Rset_+$ such that for every~$a\in \Rset_+$, there is a~$T=T(a,b)\in 
		\Rset_+$ such that
		\begin{align*}
		\norm{\x(t_0)}\leq a \quad \Rightarrow \quad \norm{\x(t)}\leq b,\ 
		\forall t\geq t_0 + T.
		\end{align*}
	\end{definition}
	Since the solutions~$\x(t)$ cannot be computed analytically, a 
	stability analysis is necessary, which allows conclusions regarding the 
	closed-loop behavior without running the policy on the real 
	system~\cite{Khalil2002}.
	\begin{theorem}
		\label{thm:Stable}
		Consider a control affine system~\eqref{eq:sys}, where~$f(\cdot)$ 
		admits a Lipschitz constant $L_f$ on $\Xset\subset\Rset^d$. 
		Assume that~$f(\cdot)$ and the observations $y_i$, $i=1,\ldots,N$,  
		satisfy the conditions of \cref{ass:samplefun}. Then, the feedback 
		linearizing controller~\eqref{eq:FeliCtrl} with~$\fhcd = \nu_N(\cdot)$
		guarantees with probability~$1-\delta$ that the tracking error $\e$
		converges to 
		\begin{align}
		\Bset = \left\{\x\in \Xset \left| \norm{\e}
		\leq\frac{\sqrt{\beta(\tau)}\sigma_{N}(\x)+\gamma(\tau)}
		{\kc\sqrt{\lambda^2+1}}\right.\right\},
		\end{align}
		with $\beta(\tau)$ and $\gamma(\tau)$ defined in \cref{th:errbound_with}.
	\end{theorem}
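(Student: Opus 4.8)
The plan is to run a Lyapunov-based ultimate boundedness argument on the filtered state $r$ and then translate the resulting bound into the tracking-error ball $\Bset$. First I would insert the feedback linearizing law~\eqref{eq:FeliCtrl}--\eqref{eq:LinCtrl} with $\fhcd=\nu_N(\cdot)$ into~\eqref{eq:sys}, which (as already recorded in \cref{subsec:track}) collapses the closed loop to the scalar filtered-state dynamics $\dot r = f(\x)-\nu_N(\x)-\kc r$. The point of this step is that the only exogenous term perturbing an otherwise exponentially stable scalar system is the model error $f(\x)-\nu_N(\x)$, which is precisely the quantity controlled by \cref{th:errbound_with}.

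Next I would condition on the event of probability at least $1-\delta$ furnished by \cref{th:errbound_with}, on which $|f(\x)-\nu_N(\x)|\le\sqrt{\beta(\tau)}\sigma_N(\x)+\gamma(\tau)=:\eta(\x)$ holds simultaneously for all $\x\in\Xset$. Choosing the Lyapunov candidate $V=\tfrac12 r^2$ and differentiating along the closed-loop trajectory gives
\[
\dot V = r\bigl(f(\x)-\nu_N(\x)\bigr)-\kc r^2 \le |r|\,\eta(\x)-\kc r^2 = -|r|\bigl(\kc|r|-\eta(\x)\bigr).
\]
Hence $\dot V<0$ whenever $|r|>\eta(\x)/\kc$, so that by the standard ultimate boundedness reasoning behind \cref{def:boundedness} the filtered state $r$ is driven into, and thereafter confined to, the region $|r|\le\eta(\x)/\kc$.

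Finally I would convert this bound on $r$ into the ball $\Bset$. Writing $r=\lambda e_1+e_2=\begin{bmatrix}\lambda&1\end{bmatrix}\e$, the induced norm $\sqrt{\lambda^2+1}=\bigl\|\begin{bmatrix}\lambda&1\end{bmatrix}\bigr\|$ links the size of $r$ to the distance of $\e$ from the manifold $r=0$, which accounts for the denominator $\kc\sqrt{\lambda^2+1}$ appearing in $\Bset$. I expect this last conversion to be the crux of the proof, for two reasons. First, $\eta(\x)$ is state dependent through $\sigma_N(\x)$, so the threshold $\eta(\x)/\kc$ moves with the trajectory; to obtain a genuine ultimate bound one has to dominate $\eta$ by its maximum over the compact set $\Xset$ (which is finite since $\sigma_N$ is continuous on the compact $\Xset$ and $\beta(\tau),\gamma(\tau)$ are constants for fixed $\tau$) before refining back to the state-dependent statement. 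Second, a bound on $|r|$ alone constrains only the single linear combination $\lambda e_1+e_2$ of the error coordinates, so translating it into a statement about $\norm{\e}$ requires invoking the exponentially stable filter $\dot e_1=-\lambda e_1 + r$ to control $e_1$ and hence $e_2=r-\lambda e_1$; handling this coupling carefully, rather than discarding it, is what yields the clean geometric factor $\sqrt{\lambda^2+1}$.
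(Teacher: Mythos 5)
Your first two steps reproduce the paper's entire proof: the paper takes $V=\tfrac{1}{2}r^2$, computes $\dot V = r\bigl(f(\x)-\nu_N(\x)-\kc r\bigr)\le\abs{r}\,\abs*{f(\x)-\nu_N(\x)}-\kc\abs{r}^2$, invokes \cref{th:errbound_with} to bound the model error uniformly on the event of probability $1-\delta$, asserts $P\bigl(\dot V<0~\forall\x\in\Xset\setminus\Bset\bigr)\ge 1-\delta$, and closes by citing the ultimate-boundedness lemma (\cref{lem:Lyap}). So up to and including the conditioning on the high-probability event, you match the published argument exactly.

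The divergence is your final paragraph, and it cuts both ways. You are right that converting the bound on the scalar $r=\lambda e_1+e_2$ into a bound on $\norm{\e}$ is the delicate point: writing $\eta(\x)=\sqrt{\beta(\tau)}\sigma_N(\x)+\gamma(\tau)$, one has $\dot V<0$ only where $\abs{r}>\eta(\x)/\kc$, and since Cauchy--Schwarz gives $\abs{r}\le\sqrt{\lambda^2+1}\,\norm{\e}$ (the inequality runs in the wrong direction for \cref{lem:Lyap}), there exist states outside $\Bset$ --- near the hyperplane $r=0$ but far from the origin --- where $\dot V\ge 0$. The paper does not perform this conversion at all: it tacitly identifies $\Xset\setminus\Bset$ with $\{\abs{r}>\eta(\x)/\kc\}$, so what its Lyapunov computation actually establishes is convergence of $r$ to the slab $\{\abs{r}\le\eta(\x)/\kc\}$, of which the stated ball $\Bset$ is merely the inscribed ball; the factor $\sqrt{\lambda^2+1}$ is the purely geometric distance-to-hyperplane normalization, not a consequence of the error dynamics. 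Your promised fix, however, does not deliver the stated constant either: running the stable filter $\dot e_1=-\lambda e_1+r$ with $\limsup_{t\to\infty}\abs{r}\le\eta/\kc$ yields $\limsup\abs{e_1}\le\eta/(\kc\lambda)$ and, via $e_2=r-\lambda e_1$, $\limsup\abs{e_2}\le 2\eta/\kc$, hence $\limsup\norm{\e}\le(\eta/\kc)\sqrt{\lambda^{-2}+4}$ --- strictly larger than $\eta/(\kc\sqrt{\lambda^2+1})$ for every $\lambda>0$. So your claim that handling the $e_1$--$e_2$ coupling carefully ``yields the clean geometric factor $\sqrt{\lambda^2+1}$'' is quantitatively wrong: carried out, your route proves ultimate boundedness to a larger ball than $\Bset$. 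In short, where the paper proves something you match it; the step you correctly flag as the crux is a gap in the paper's own proof as well, and your sketched resolution would not close it with the constant as stated.
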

	Based on Lyapunov theory, it can be shown that the tracking error converges if 
	the feedback term~$|\kc r|$ dominates the model error~$|f(\cdot)-\fh(\cdot)|$. 
	As 
	\cref{th:errbound_with} bounds the model error, the set for which holds~$|\kc 
	r|>\sqrt{\beta(\tau)}\sigma_{N}(\x)+\gamma(\tau)$ can be computed.
	It can directly be seen, that the ultimate bound can be made arbitrarily small, 
	by increasing the gains~$\lambda,\kc$ or with more training points to 
	decrease~$\sigma_{N}(\cdot)$.
	Computing the set~$\Bset$ allows to check whether the 
	controller~\eqref{eq:FeliCtrl} adheres to the safety 
	requirements.

	\section{Numerical Evaluation}
	\label{sec:numeval}
	We evaluate our theoretical results in two simulations.$\!$\footnote{Matlab 
		code 
		is online available:
		\url{https://gitlab.lrz.de/ga68car/GPerrorbounds4safecontrol}} In 
	\cref{subsec:num2D},
	we investigate the effect of applying \cref{th:errbound_without} to determine
	a probabilistic Lipschitz constant for an unknown synthetic system. Furthermore,
	we analyze the effect of unevenly distributed training samples on the tracking
	error bound from \cref{thm:Stable}. In \cref{subsec:robot}, we apply the 
	feedback
	linearizing controller \eqref{eq:FeliCtrl} to a tracking problem with a robotic
	manipulator. 
	
	\subsection{Synthetic System with Unknown Lipschitz Constant~$L_f$}
	\label{subsec:num2D}
	As an example for a system of form~\eqref{eq:sys}, we consider~$f(\x) = 
	1-\sin(x_1) +  \frac{1}{1+\exp(-x_2)}$. Based on a uniform grid over~$[0\ 
	3]\times[-3\ 3]$ the training set is formed of~$81$ points with~$\sigon = 
	0.01$. The reference trajectory is a circle~$x_d(t) = 2\sin(t)$ and the 
	controller gains are~$\kc=2$ and~$\lambda=1$. We choose a probability of 
	failure~$\delta = 0.01$,~$\delta_L = 0.01$ and set~$\tau = 10^{-8}$. The state 
	space is the rectangle~$\Xset =[-6\ 4]\times[-4\ 4]$. A squared exponential 
	kernel with automatic relevance determination is utilized, for which~$L_k$ 
	and~$\max_{\x,\x'\in\Xset}k(\x,\x')$ is derived analytically for the optimized 
	hyperparameters.
	We make use of \cref{th:errbound_without} to estimate the Lipschitz 
	constant~$L_f$, and it turns out to be a conservative bound (factor $10\sim 
	100$). However, this is not crucial, because~$\tau$ can be chosen arbitrarily 
	small and $\gamma(\tau)$ is dominated by~$\sqrt{\beta(\tau)}\omega_{\sigma_N}(\tau)$. As 
	\cref{th:errbound_with,th:errbound_without} are subsequently utilized in this 
	example, a union bound approximation can be applied to combine~$\delta$ 
	and~$\delta_L$.\looseness=-1
	
	The results are shown in \cref{fig:LyapDecrTe,fig:error}. Both plots show, that 
	the safety bound here is rather conservative, which also results from the fact 
	that the violation probability was set to~$1\%$.
	\looseness=-1

\begin{figure}
	\centering 
	\includegraphics{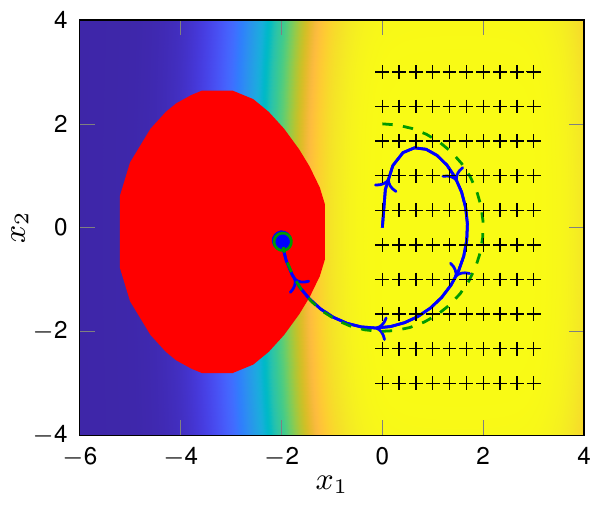}
	\includegraphics{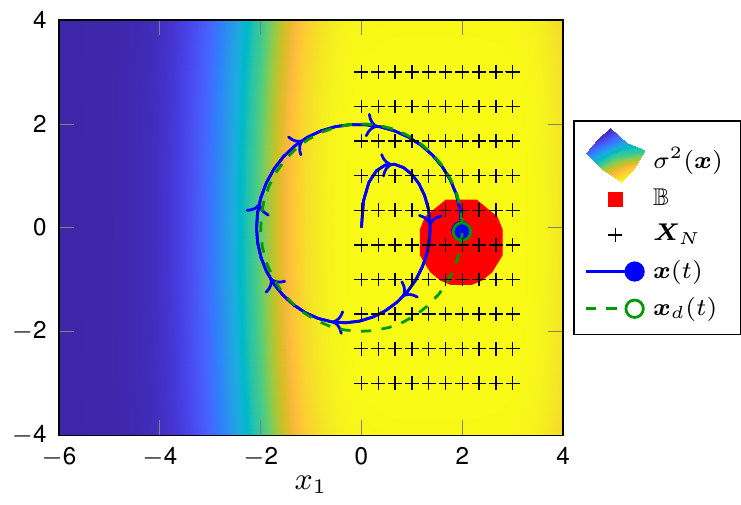}
	\caption{Snapshots of the state trajectory (blue) as it approaches the 
	desired trajectory (green). In low uncertainty areas (yellow background), 
	the set~$\Bset$ (red) is significantly smaller then in high uncertainty 
	areas (blue background).}
	\label{fig:LyapDecrTe}
\end{figure}

\begin{figure*}
	\centering
	\includegraphics{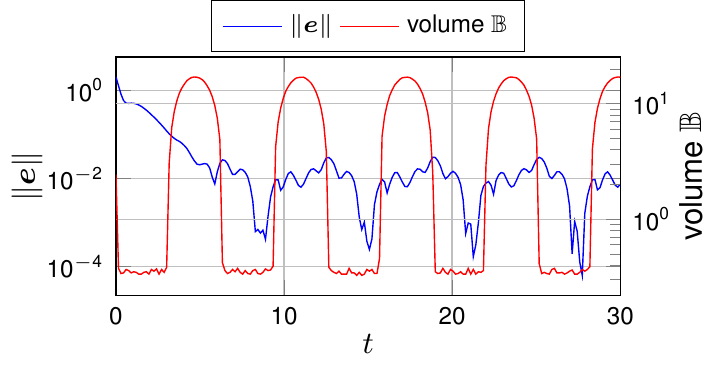}
	\caption{When the ultimate bound (red) is large, 
		the tracking error (blue) increases due to the less precise model.}
	\label{fig:error}
\end{figure*}

\subsection{Robotic Manipulator with 2 Degrees of Freedom}
\label{subsec:robot}
We consider a planar robotic manipulator in the~$z_1$-$z_2$-plane with 2 
degrees 
of freedom 
(DoFs), with unit length and unit masses / inertia for all links. For this 
example, we consider~$L_f$ to be known and extend 
\cref{th:errbound_with} to the multidimensional case using the union bound. 
The state space is here four dimensional~$[q_1\ \dot{q}_1\ q_2\ \dot{q}_2]$ and 
we consider $\Xset = [-\pi\ \pi]^4$. 
The $81$ training points are distributed in~$[-1\ 1]^4$ and the control gain 
is~$\kc=7$, while other constants remain the same as in \cref{subsec:num2D}. The 
desired trajectories for both joints are again sinusoidal as 
shown in \cref{fig:RobotLyap} on the right side. The robot dynamics are derived 
according to \cite[Chapter 4]{Murray1994}.

\Cref{th:errbound_with} allows to derive an error bound in the joint space of 
the robot according to \cref{thm:Stable}, which can be transformed into the task
space as shown in \cref{fig:RobotLyap} on the left. Thus, based on the learned 
(initially unknown) dynamics, it can be guaranteed, that the robot will not 
leave the depicted area and can thereby be considered as safe.

Previous error bounds for GPs are not applicable to this practical setting, 
because they i) do not allow the observation noise on the training data to be 
Gaussian~\cite{Srinivas2012}, which is a common assumption in robotics, ii) 
utilize constants which cannot be computed efficiently (e.g. maximal 
information 
gain in~\cite{Srinivas2010}) or iii) make assumptions difficult to 
verify in practice (e.g. the RKHS norm of the unknown dynamical 
system~\cite{Berkenkamp2016a}).
\begin{figure}
	\centering 
	\includegraphics{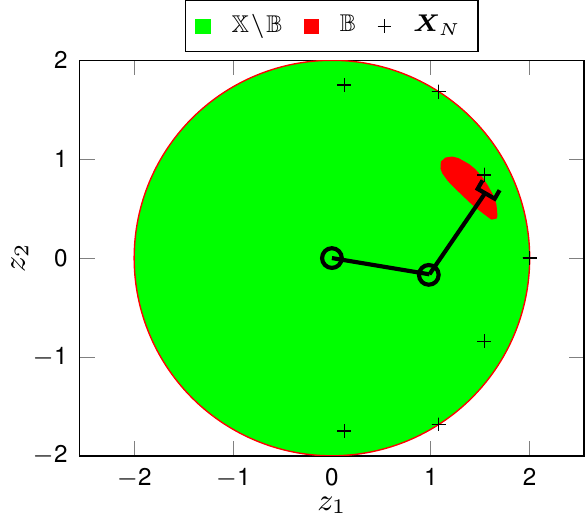}
	\includegraphics{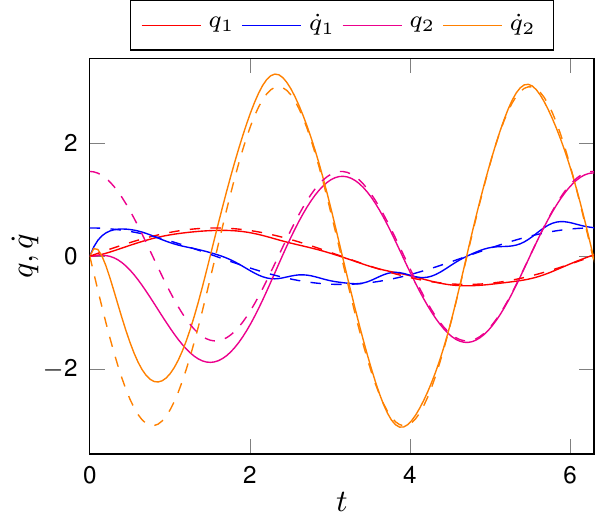}
	\caption{The task space of the robot (left) shows the robot is guaranteed 
		to remain in~$\Bset$ (red) after a transient phase. Hence, the remaining 
		state space~$\Xset\setminus\Bset$~(green) can be considered as safe. The 
		joint angles and velocities (right) converge to the desired trajectories 
		(dashed lines) over time.}
	\label{fig:RobotLyap}
\end{figure}

\section{Conclusion}
This paper presents a novel uniform error bound for Gaussian process regression. 
By exploiting the inherent probability distribution of Gaussian processes instead
of the reproducing kernel Hilbert space attached to the covariance kernel, a 
wider class of functions can be 
considered. Furthermore, we demonstrate how probabilistic Lipschitz constants can 
be estimated from the GP distribution and derive sufficient conditions to reach
arbitrarily small uniform error bounds. We employ the derived results to show safety
bounds for a tracking control algorithm and evaluate them in simulation for a 
robotic manipulator.

\subsubsection*{Acknowledgments}

Armin Lederer gratefully  acknowledges  financial  support from  the German Academic 
Scholarship Foundation.

\bibliographystyle{IEEEtran}
\bibliography{myBib}

\appendix

\section{Proof of Theorem 3.1}

\begin{proof}[Proof of Theorem 3.1]
	We first prove the Lipschitz constant of the posterior mean $\nu_N(\bm{x})$ and 
	the modulus of continuity of the standard deviation $\sigma_N(\bm{x})$, before
	we derive the bound of the regression error. The norm of the difference between 
	the posterior mean $\nu_N(\bm{x})$ evaluated at two different points is given by
	\begin{align*}
	\|\nu_N(\bm{x})-\nu_N(\bm{x}')\|&=
	\left\| \left(\bm{k}(\bm{x},\bm{X}_N)-\bm{k}(\bm{x}',\bm{X}_N)\right)
	\bm{\alpha}\right\|
	\end{align*}
	with 
	\begin{align}
	\bm{\alpha}=(\bm{K}(\bm{X}_N,\bm{X}_N)+\sigma_n^2\bm{I}_N)^{-1}\bm{y}_N.
	\end{align}
	Due to the Cauchy-Schwarz inequality and the Lipschitz continuity of the 
	kernel we obtain
	\begin{align*}
	\|\nu_N(\bm{x})-\nu_N(\bm{x}')\|&\leq L_k\sqrt{N} \left\| \bm{\alpha} 
	\right\|\|\bm{x}-\bm{x}'\|,
	\end{align*}
	which proves Lipschitz continuity of the mean $\nu_N(\bm{x})$.
	In order to calculate a modulus of continuity for the posterior standard deviation
	$\sigma_N(\bm{x})$ observe that the difference of the variance at two points 
	$\bm{x},\bm{x}'\in\mathbb{X}$ can be expressed as 
	\begin{align}
	|\sigma_N^2(\bm{x})-\sigma_N^2(\bm{x}')|&=|\sigma_N(\bm{x})-\sigma_N(\bm{x}')||
	\sigma_N(\bm{x})+\sigma_N(\bm{x}')|.
	\end{align}
	Since the standard deviation is positive semidefinite we have
	\begin{align}
	|\sigma_N(\bm{x})+\sigma_N(\bm{x}')|\geq |\sigma_N(\bm{x})-\sigma_N(\bm{x}')|
	\end{align}
	and hence, we obtain
	\begin{align}
	|\sigma_N^2(\bm{x})-\sigma_N^2(\bm{x}')|\geq |\sigma_N(\bm{x})-\sigma_N(\bm{x}')|^2.
	\end{align}
	Therefore, it is sufficient to bound the difference of the variance at two 
	points $\bm{x},\bm{x}'\in\mathbb{X}$ and take the square root of the resulting
	expression. Due to the Cauchy-Schwarz inequality and Lipschitz continuity of 
	$k(\cdot,\cdot)$ the absolute value of the 
	difference of the variance can be bounded by
	\begin{align}
	&|\sigma_N^2(\bm{x})-\sigma_N^2(\bm{x}')|\leq 2L_k\|\bm{x}-\bm{x}'\|\nonumber\\
	&+\left\|\bm{k}(\bm{x},\bm{X}_N)-\bm{k}(\bm{x}',\bm{X}_N)\right\|
	\left\|(\bm{K}(\bm{X}_N,\bm{X}_N)+\sigma_n^2\bm{I}_N)^{-1}\right\|
	\left\|\bm{k}(\bm{X}_N,\bm{x})+\bm{k}(\bm{X}_N,\bm{x}')\right\|.
	\label{eq:sigdiff}
	\end{align}
	On the one hand, we have
	\begin{align}
	\|\bm{k}(\bm{x},\bm{X}_N)-\bm{k}(\bm{x}',\bm{X}_N)\|\leq 
	\sqrt{N}L_k\|\bm{x}-\bm{x}'\|
	\label{eq:sigminus}
	\end{align}
	due to Lipschitz continuity of $k(\bm{x},\bm{x}')$. On the other hand 
	we have
	\begin{align}
	\|\bm{k}(\bm{x},\bm{X}_N)+\bm{k}(\bm{x}',\bm{X}_N)\|\leq 2\sqrt{N}
	\max\limits_{\bm{x},\bm{x}'\in\mathbb{X}}k(\bm{x},\bm{x}').
	\label{eq:sigplus}
	\end{align}
	The modulus of continuity $\omega_{\sigma_N}(\tau)$ follows from 
	substituting \eqref{eq:sigminus} and \eqref{eq:sigplus} in \eqref{eq:sigdiff} 
	and taking the square root of the resulting expression. Finally, we prove
	the probabilistic uniform error bound by exploiting the fact that for every 
	grid $\mathbb{X}_{\tau}$ with $|\mathbb{X}_{\tau}|$ grid points and 
	\begin{align}
	\max\limits_{\bm{x}\in\mathbb{X}} \min\limits_{\bm{x}'\in\mathbb{X}_{\tau}}
	\|\bm{x}-\bm{x}'\|\leq \tau
	\label{eq:gridconstant}
	\end{align}
	it holds with probability of at least
	$1-|\mathbb{X}_{\tau}|\mathrm{e}^{-\beta(\tau)/2}$ that \cite{Srinivas2012}
	\begin{align}
	|f(\bm{x})-\nu_{N}(\bm{x})|\leq \sqrt{\beta(\tau)}\sigma_{N}(\bm{x}) 
	\quad \forall\bm{x}\in \mathbb{X}_{\tau}.
	\end{align}
	Choose \mbox{$\beta(\tau)=2\log\left(\frac{|\mathbb{X}_{\tau}|}{\delta}\right)$}, 
	then
	\begin{align}
	|f(\bm{x})-\nu_{N}(\bm{x})|\leq \sqrt{\beta(\tau)}\sigma_{N}(\bm{x}) 
	\quad \forall\bm{x}\in \mathbb{X}_{\tau}
	\end{align}
	holds with probability of at least $1-\delta$. Due to continuity of 
	$f(\bm{x})$, $\nu_N(\bm{x})$ and $\sigma_N(\bm{x})$ we obtain
	\begin{align}
	\min\limits_{\bm{x}'\in\mathbb{X}_{\tau}}|f(\bm{x})-f(\bm{x}')|&\leq 
	\tau L_f\quad \forall \bm{x}\in\mathbb{X}\\
	\min\limits_{\bm{x}'\in\mathbb{X}_{\tau}}|\nu_N(\bm{x})-\nu_N(\bm{x}')|&\leq 
	\tau L_{\nu_N}\quad \forall \bm{x}\in\mathbb{X}\\
	\min\limits_{\bm{x}'\in\mathbb{X}_{\tau}}|\sigma_N(\bm{x})-\sigma_N(\bm{x}')|&\leq 
	\omega_{\sigma_N}(\tau)\quad \forall \bm{x}\in\mathbb{X}.
	\end{align}
	Moreover, the minimum number of grid points satisfying \eqref{eq:gridconstant} is 
	given by the covering number $M(\tau,\mathbb{X})$. Hence, we obtain
	\begin{align}
	P\left(|f(\bm{x})-\nu_{N}(\bm{x})|\leq 
	\sqrt{\beta(\tau)}\sigma_{N}(\bm{x})+\gamma(\tau), 
	~\forall\bm{x}\in\mathbb{X}\right)\geq 1-\delta,
	\end{align}
	where
	\begin{align}
	\beta(\tau)&=2\log\left(\frac{M(\tau,\mathbb{X})}{\delta}\right)\\
	\gamma(\tau)&=(L_f+L_{\nu_N})\tau+\sqrt{\beta(\tau)}\omega_{\sigma_N}(\tau).
	\end{align}
\end{proof}

\section{Proof of Theorem 3.2}
In order to proof Theorem 3.2, several auxiliary results are necessary,
which are derived in the following. The first lemma concerns the expected
supremum of a Gaussian process.
\begin{lemma}
	\label{lem:expsup}
	Consider a Gaussian process with a continuously differentiable covariance 
	function $k(\cdot,\cdot)$ and let $L_k$ denote its Lipschitz constant on 
	the set $\mathbb{X}$ with maximum extension  
	\mbox{$r=\max_{\bm{x},\bm{x}'\in\mathbb{X}}\|\bm{x}-\bm{x}'\|$}. 
	Then, the expected supremum of a sample function $f(\bm{x})$ of this
	Gaussian process satisfies
	\begin{align}
	E\left[\sup\limits_{\bm{x}\in \mathbb{X}}f(\bm{x})\right]\leq 
	12\sqrt{6d}\max\left\{\max\limits_{\bm{x}\in\mathbb{X}}\sqrt{k(\bm{x},\bm{x})},\sqrt{rL_k}\right\}.
	\end{align}
\end{lemma}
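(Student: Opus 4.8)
The plan is to bound the expected supremum through Dudley's metric entropy criterion \cite{Dudley1967}, for which the essential preliminary step is to control the canonical pseudometric of the Gaussian process by the Euclidean metric. First I would identify this pseudometric, $d_f(\bm{x},\bm{x}')=\sqrt{E[(f(\bm{x})-f(\bm{x}'))^2]}$. Since the process is zero mean with covariance $k$, expanding the square gives $E[(f(\bm{x})-f(\bm{x}'))^2]=k(\bm{x},\bm{x})-2k(\bm{x},\bm{x}')+k(\bm{x}',\bm{x}')$. Using symmetry of the kernel I would rewrite this as $[k(\bm{x},\bm{x})-k(\bm{x},\bm{x}')]+[k(\bm{x}',\bm{x}')-k(\bm{x}',\bm{x})]$, so that each bracket is a difference of $k$ evaluated at a fixed first argument and two different second arguments. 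Lipschitz continuity of $k$ (with constant $L_k$, which is identical in both arguments by symmetry) then bounds each bracket by $L_k\|\bm{x}-\bm{x}'\|$, yielding the H\"older-type estimate $d_f(\bm{x},\bm{x}')\le\sqrt{2L_k\|\bm{x}-\bm{x}'\|}$.

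This estimate transfers covering numbers from the Euclidean metric to $d_f$: any Euclidean cover of $\Xset$ at radius $\epsilon^2/(2L_k)$ is an $\epsilon$-cover with respect to $d_f$, so the $d_f$-covering number at scale $\epsilon$ is at most $M(\epsilon^2/(2L_k),\Xset)$. Because $\Xset$ has Euclidean diameter $r$ it sits inside a Euclidean ball of radius $r$, and a standard volumetric argument bounds $M(\rho,\Xset)\le(3r/\rho)^d$, hence $\log N(\Xset,d_f,\epsilon)\le d\log(6rL_k/\epsilon^2)$. Note here that the factor $6=3\cdot 2$ combines the $2L_k$ from the metric bound with the $3$ from the Euclidean covering estimate, which is where the $6$ under the square root in the statement originates.

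With these ingredients I would invoke Dudley's metric entropy criterion, which (together with continuity of $k$, guaranteeing separability and the finiteness of the entropy integral, and hence almost sure boundedness of the sample paths) gives $E[\sup_{\bm{x}\in\Xset}f(\bm{x})]\le 12\int_0^\infty\sqrt{\log N(\Xset,d_f,\epsilon)}\,d\epsilon$, where the integrand vanishes for $\epsilon$ beyond the $d_f$-diameter $D$ of $\Xset$. Substituting the entropy bound and rescaling $\epsilon$ reduces the problem to a Gaussian-type integral of the form $\int\sqrt{\log(1/t)}\,dt$ on a bounded interval; pulling out the factor $\sqrt{d}$ from the entropy bound produces the prefactor $\sqrt{6d}$, and together with the constant $12$ of Dudley's criterion this gives the shape of the claimed bound.

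The main obstacle is the closed-form evaluation of this entropy integral and, in particular, the emergence of the maximum $\max\{\max_{\bm{x}\in\Xset}\sqrt{k(\bm{x},\bm{x})},\sqrt{rL_k}\}$. The $d_f$-diameter admits two bounds, namely $D\le 2\max_{\bm{x}\in\Xset}\sqrt{k(\bm{x},\bm{x})}$ from the triangle inequality in $L^2$ and $D\le\sqrt{2rL_k}$ from the Lipschitz estimate, and I expect the delicate part to be treating the regime in which each is binding and unifying the two cases into the single maximum appearing in the statement, while tracking the covering-number constants and the upper integration limit carefully enough to pin down the exact factor $12\sqrt{6d}$.
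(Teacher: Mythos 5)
Your plan is the paper's plan: the supplementary proof of this lemma also starts from Dudley's metric entropy criterion \cite{Dudley1967}, controls the canonical pseudometric by the Euclidean one via the same H\"older estimate $d_k(\bm{x},\bm{x}')\leq\sqrt{2L_k\|\bm{x}-\bm{x}'\|}$, and transfers covering numbers accordingly (the paper inverts the modulus $\psi(\varrho')=\sup_{\|\bm{x}-\bm{x}'\|_\infty\leq\varrho'}d_k(\bm{x},\bm{x}')$ and uses a uniform grid to get the packing-number bound $N(\varrho,\Xset)\leq(1+4rL_k/\varrho^2)^d$, while you use a volumetric bound $(3r/\rho)^d$ --- an immaterial difference). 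So the architecture is correct and matches the paper.

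The genuine gap is exactly the step you defer at the end: the closed-form evaluation of the entropy integral. The paper does not unify two diameter regimes by hand; it quotes a ready-made estimate from \cite{Grunewalder2010}, namely $\int_0^{\max_{\bm{x}\in\Xset}\sqrt{k(\bm{x},\bm{x})}}\sqrt{\log\left(1+\frac{4rL_k}{\varrho^2}\right)}\,\mathrm{d}\varrho\leq\sqrt{6}\max\left\{\max_{\bm{x}\in\Xset}\sqrt{k(\bm{x},\bm{x})},\sqrt{rL_k}\right\}$, and this single inequality (together with Dudley's constant $12$ and the $\sqrt{d}$ pulled out of the exponent $d$ in the entropy bound) is the entire source of the constant $12\sqrt{6d}$ and of the maximum in the statement. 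Note also that the integral is truncated at $\max_{\bm{x}\in\Xset}\sqrt{k(\bm{x},\bm{x})}$ (the supremum of the pointwise standard deviation), not at the $d_k$-diameter as you propose. Your heuristic that the $6$ ``originates'' as $3\cdot 2$ from the covering constants is incorrect: a multiplicative constant sitting \emph{inside} the logarithm, as in $\sqrt{d\log(6rL_k/\epsilon^2)}$, cannot be extracted as a prefactor $\sqrt{6d}$; subadditivity of the square root only splits it off as an additive term $\sqrt{\log 6}$, and the $\sqrt{6}$ in the lemma genuinely emerges from the integral estimate above. Without either carrying out that (nontrivial) computation or citing it, your argument establishes a bound of the correct form $C\sqrt{d}\max\{\max_{\bm{x}\in\Xset}\sqrt{k(\bm{x},\bm{x})},\sqrt{rL_k}\}$ for some absolute constant $C$, but does not pin down the specific constant $12\sqrt{6d}$ claimed in the statement.
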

\begin{proof}
	We prove this lemma by making use of the metric entropy criterion for the 
	sample continuity of some version of a Gaussian process \cite{Dudley1967}.
	This criterion allows to bound the expected supremum of a sample function $f(\bm{x})$
	by
	\begin{equation}
	\mathrm{E}\left[ \sup\limits_{\bm{x}\in\mathbb{X}}f(\bm{x}) \right]\leq \int\limits_0^{\max\limits_{\bm{x}\in\mathbb{X}}\sqrt{k(\bm{x},\bm{x})}}
	\sqrt{\log(N(\varrho,\mathbb{X}))}\mathrm{d}\varrho,
	\label{eq:metEntropy}
	\end{equation}
	where $N(\varrho,\mathbb{X})$ is the $\varrho$-packing number of $\mathbb{X}$ 
	with respect to the covariance pseudo-metric
	\begin{align}
	d_k(\bm{x},\bm{x}')=\sqrt{k(\bm{x},\bm{x})+k(\bm{x}',\bm{x}')-2k(\bm{x},\bm{x}')}.
	\end{align}
	Instead of bounding the $\varrho$-packing number, we bound the $\varrho/2$-covering 
	number, which is known to be an upper bound. The covering number can be easily bounded 
	by transforming the problem of covering $\mathbb{X}$ with respect to the pseudo-metric 
	$d_k(\cdot,\cdot)$ into a coverage problem in the original metric of $\mathbb{X}$. For 
	this reason, define
	\begin{align}
	\psi(\varrho')=\sup\limits_{\subalign{\bm{x},\bm{x}'&\in \mathbb{X}\\ 
			\|\bm{x}-\bm{x}'&\|_{\infty}\leq \varrho'}} d_k(\bm{x},\bm{x}'),
	\end{align}
	which is continuous due to the continuity of the covariance kernel $k(\cdot,\cdot)$. 
	Consider the inverse function
	\begin{align}
	\psi^{-1}(\varrho)=\inf\left\{\varrho'>0:~\psi(\varrho')>\varrho\right\}.
	\end{align}
	Continuity of $\psi(\cdot)$ implies $\varrho=\psi(\psi^{-1}(\varrho))$. In particular, 
	this means that we can guarantee $d_k(\bm{x},\bm{x}')\leq \frac{\varrho}{2}$ if 
	\mbox{$\|\bm{x}-\bm{x}'\|\leq \psi^{-1}(\frac{\varrho}{2})$}. Due to this relationship 
	it is sufficient to construct an uniform grid with grid constant $2\psi^{-1}(\frac{\varrho}{2})$ 
	in order to obtain a $\varrho/2$-covering net of $\mathbb{X}$. Furthermore, the cardinality 
	of this grid is an upper bound for the $\varrho/2$-covering number, i.e.
	\begin{align}
	M(\varrho/2,\mathbb{X})\leq 
	\left\lceil \frac{r}{2\psi^{-1}(\frac{\varrho}{2})} \right\rceil^{d}.
	\end{align}
	Therefore, it follows that
	\begin{align}
	N(\varrho,\mathbb{X})\leq 
	\left\lceil \frac{r}{2\psi^{-1}(\frac{\varrho}{2})} \right\rceil^{d}.
	\end{align}
	Due to the Lipschitz continuity of the covariance function, we can bound 
	$\psi(\cdot)$ by
	\begin{align}
	\psi(\varrho')&\leq \sqrt{2L_k\varrho'}.
	\end{align}
	Hence, the inverse function satisfies
	\begin{align}
	\psi^{-1}\left(\frac{\varrho}{2}\right)\geq 
	\left(\frac{\varrho}{2\sqrt{2L_k}}\right)^2
	\end{align}
	and consequently
	\begin{align}
	N(\varrho,\mathbb{X})\leq \left(1+\frac{4rL_k}{\varrho^2}\right)^{d}
	\end{align}
	holds, where the ceil operator is resolved through the addition of $1$.
	Substituting this expression in the metric entropy bound 
	\eqref{eq:metEntropy} yields
	\begin{align}
	E\left[\sup\limits_{\bm{x}\in \mathbb{X}}f(\bm{x})\right]\leq 12\sqrt{d}
	\int\limits_0^{\max\limits_{\bm{x}\in\mathbb{X}}\sqrt{k(\bm{x},\bm{x})}}
	\sqrt{\log\left(1+\frac{4rL_k}{\varrho^2}\right)}\mathrm{d}\varrho.
	\end{align}
	As shown in \cite{Grunewalder2010} this integral can be bounded by
	\begin{align}
	\int\limits_0^{\max\limits_{\bm{x}\in\mathbb{X}}\sqrt{k(\bm{x},\bm{x})}}
	\sqrt{\log\left(1+\frac{4rL_k}{\varrho^2}\right)}
	\mathrm{d}\varrho\leq
	\sqrt{6}\max\left\{\max\limits_{\bm{x}\in\mathbb{X}}\sqrt{k(\bm{x},\bm{x})},
	\sqrt{rL_k}\right\}
	\end{align}
	which proves the lemma.
\end{proof}
Based on the expected supremum of Gaussian process it is possible to 
derive a high probability bound for the supremum of a sample function.
\begin{lemma}
	\label{lem:supbound}
	Consider a Gaussian process with a continuously differentiable covariance 
	function $k(\cdot,\cdot)$ and let $L_k$ denote its Lipschitz constant on 
	the set $\mathbb{X}$ with maximum extension  
	\mbox{$r=\max_{\bm{x},\bm{x}'\in\mathbb{X}}\|\bm{x}-\bm{x}'\|$}. 
	Then, with probability of at least $1-\delta_L$ the supremum of a sample
	function $f(\bm{x})$ of this Gaussian process is bounded by 
	\begin{align}
	\sup\limits_{\bm{x}\in\mathbb{X}}f(\bm{x})\leq &
	\sqrt{2\log\left( \frac{1}{\delta_L} \right)}\max\limits_{\bm{x}\in\mathbb{X}}
	\sqrt{k(\bm{x},\bm{x})}+12\sqrt{6d}
	\max\left\{\max\limits_{\bm{x}\in\mathbb{X}}\sqrt{k(\bm{x},\bm{x})},
	\sqrt{rL_{k}}\right\}.
	\end{align}
\end{lemma}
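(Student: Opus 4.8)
The plan is to combine the bound on the expected supremum established in \cref{lem:expsup} with a Gaussian concentration inequality controlling the fluctuations of the supremum about its mean. The crucial preliminary observation is that the finiteness of the metric entropy integral \eqref{eq:metEntropy}, already exploited in the proof of \cref{lem:expsup}, guarantees almost sure sample continuity of $f(\cdot)$; since $\mathbb{X}$ is compact, this in turn ensures that $\sup_{\bm{x}\in\mathbb{X}}f(\bm{x})$ is almost surely finite and possesses a finite expectation. These are precisely the hypotheses required to invoke the Borell-TIS inequality \cite{Talagrand1994}, which is the main tool I would employ.

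First, I would apply the Borell-TIS inequality to the centered Gaussian process $f(\cdot)$, which asserts that for every $u>0$
\begin{align}
P\left( \sup\limits_{\bm{x}\in\mathbb{X}} f(\bm{x}) - E\left[ \sup\limits_{\bm{x}\in\mathbb{X}} f(\bm{x}) \right] > u \right) \leq \exp\left( -\frac{u^2}{2\sigma_{\mathbb{X}}^2} \right),
\end{align}
where $\sigma_{\mathbb{X}}^2=\sup_{\bm{x}\in\mathbb{X}}\mathrm{Var}(f(\bm{x}))=\max_{\bm{x}\in\mathbb{X}}k(\bm{x},\bm{x})$ denotes the maximal marginal variance of the process. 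I would then choose $u$ so that the right-hand side equals $\delta_L$; solving $\exp(-u^2/(2\sigma_{\mathbb{X}}^2))=\delta_L$ gives $u=\sqrt{2\log(1/\delta_L)}\,\max_{\bm{x}\in\mathbb{X}}\sqrt{k(\bm{x},\bm{x})}$. Hence, with probability at least $1-\delta_L$,
\begin{align}
\sup\limits_{\bm{x}\in\mathbb{X}} f(\bm{x}) \leq E\left[ \sup\limits_{\bm{x}\in\mathbb{X}} f(\bm{x}) \right] + \sqrt{2\log\left( \frac{1}{\delta_L} \right)}\,\max\limits_{\bm{x}\in\mathbb{X}}\sqrt{k(\bm{x},\bm{x})}.
\end{align}

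Finally, I would substitute the bound on the expected supremum from \cref{lem:expsup}, replacing $E[\sup_{\bm{x}\in\mathbb{X}}f(\bm{x})]$ by $12\sqrt{6d}\max\{\max_{\bm{x}\in\mathbb{X}}\sqrt{k(\bm{x},\bm{x})},\sqrt{rL_k}\}$, which immediately yields the claimed inequality. I do not expect any genuine difficulty here, as the argument is essentially a textbook application of Gaussian concentration combined with the already-proved mean bound; the only point demanding care is the verification of the Borell-TIS hypotheses, and this is exactly where the sample-continuity consequence of the finite entropy integral in \cref{lem:expsup} is indispensable, since without almost sure boundedness both the expected supremum and the concentration statement would be meaningless.
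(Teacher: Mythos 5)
Your proposal is correct and follows essentially the same route as the paper's own proof: an application of the Borell--TIS inequality with $c=\sqrt{2\log(1/\delta_L)}\max_{\bm{x}\in\mathbb{X}}\sqrt{k(\bm{x},\bm{x})}$, combined with the expected-supremum bound from \cref{lem:expsup}. Your explicit verification of the Borell--TIS hypotheses via sample continuity from the finite entropy integral is a point the paper leaves implicit, but it does not change the argument.
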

\begin{proof}
	We prove this lemma by exploiting the wide theory of concentration inequalities 
	to derive a bound for the supremum of the sample function $f(\bm{x})$. We apply the 
	Borell-TIS inequality \cite{Talagrand1994}
	\begin{align}
	P\Bigg( \sup\limits_{\bm{x}\in\mathbb{X}}f(\bm{x})-
	E\Bigg[ \sup\limits_{\bm{x}\in\mathbb{X}}f&(\bm{x}) \Bigg] \geq 
	c \Bigg)\leq \exp\left( -\frac{c^2}{2\max\limits_{\bm{x}\in\mathbb{X}}
		k(\bm{x},\bm{x})} \right).
	\label{eq:Tal}
	\end{align}
	Due to \cref{lem:expsup} we have
	\begin{align}
	\label{eq:Esup}
	E\left[\sup\limits_{\bm{x}\in \mathbb{X}}f(\bm{x})\right]\leq 
	12\sqrt{6d}\max\left\{\max\limits_{\bm{x}\in\mathbb{X}}\sqrt{k(\bm{x},\bm{x})},\sqrt{rL_k}\right\}.
	\end{align}
	The lemma follows from 
	substituting \eqref{eq:Esup} in \eqref{eq:Tal} and choosing 
	$c=\sqrt{2\log\left( \frac{1}{\delta_L} \right)}\max\limits_{\bm{x}\in\mathbb{X}}\sqrt{k(\bm{x},\bm{x})}$.
\end{proof}
Finally, we exploit the fact that the derivative of a sample function is a sample function
from another Gaussian process to prove the high probability Lipschitz constant in Theorem 3.2.
\begin{proof}[Proof of Theorem 3.2]
	Continuity of the sample function $f(\bm{x})$ follows directly from 
	\cite[Theorem 5]{Ghosal2006}. Furthermore, this theorem guarantees that the derivative 
	functions $\frac{\partial}{\partial x_i}f(\bm{x})$ are samples from derivative Gaussian 
	processes with covariance functions
	\begin{align}
	k_{\partial i}(\bm{x},\bm{x}')=
	\frac{\partial^2}{\partial x_i\partial x_i'}k(\bm{x},\bm{x}').
	\end{align}
	Therefore, we can apply \cref{lem:supbound} to each of the derivative processes and 
	obtain with probability of at least $1-\frac{\delta_L}{d}$
	\begin{align}
	-L_{f_{\partial i}} \leq\sup\limits_{\bm{x}\in\mathbb{X}}\frac{\partial}{\partial x_i}f(\bm{x}) 
	\leq L_{f_{\partial i}}, 
	\label{eq:partderbound}
	\end{align}
	where
	\begin{align}
	L_{f_{\partial i}}&=\sqrt{2\log\left( \frac{2d}{\delta_L} \right)}\max\limits_{\bm{x}\in\mathbb{X}}
	\sqrt{k_{\partial i}(\bm{x},\bm{x})}+12\sqrt{6d}
	\max\left\{\max\limits_{\bm{x}\in\mathbb{X}}\sqrt{k_{\partial i}(\bm{x},\bm{x})},
	\sqrt{rL_{k}^{\partial i}}\right\}
	\end{align}
	and $L_{k}^{\partial i}$ is the Lipschitz constant of derivative kernel 
	$k_{\partial i}(\bm{x},\bm{x}')$. Applying the union bound over all partial derivative 
	processes $i=1,\ldots,d$ finally yields the result.
\end{proof}

\section{Proof of Theorem 3.3}

\begin{proof}[Proof of Theorem 3.3]
	Due to Theorem 3.1 with $\beta_N(\tau)=2\log\left( \frac{M(\tau,\mathbb{X})\pi^2N^2}{3\delta} \right)$
	and the union bound over all $N>0$ it follows that
	\begin{align}
	\sup\limits_{\bm{x}\in \mathbb{X}}|f(\bm{x})-\nu_{N}(\bm{x})|\leq
	\sqrt{\beta_N(\tau)}\sigma_{N}(\bm{x})+\gamma_N(\tau)\quad \forall N>0
	\label{eq:un_err}
	\end{align}
	with probability of at least $1-\delta/2$. A trivial 
	bound for the covering number can be obtained by considering a uniform grid over the 
	cube containing $\mathbb{X}$. This approach leads to
	\begin{align}
	M(\tau,\mathbb{X})\leq \left(1+\frac{r}{\tau}\right)^{d},
	\end{align}
	where \mbox{$r=\max_{\bm{x},\bm{x}'\in\mathbb{X}}\|\bm{x}-\bm{x}'\|$}. Therefore,
	we have
	\begin{align}
	\beta_N(\tau)\leq 2d\log\left(1+\frac{r}{\tau}\right)+4\log(\pi N)-2\log(3\delta).
	\label{eq:beta(tau)}
	\end{align}
	Furthermore, the Lipschitz constant $L_{\nu_N}$ is bounded by 
	\begin{align}
	L_{\nu_N}&\leq L_k\sqrt{N} 
	\left\| (\bm{K}(\bm{X}_N,\bm{X}_N)+\sigma_n^2\bm{I}_N)^{-1}\bm{y}_N \right\|
	\end{align}
	due to Theorem 3.1. Since the Gram matrix $\bm{K}(\bm{X}_N,\bm{X}_N)$ is positive 
	semidefinite and $f(\cdot)$ is bounded by $\bar{f}$, we can bound 
	$\left\| (\bm{K}(\bm{X}_N,\bm{X}_N)+\sigma_n^2\bm{I}_N)^{-1}\bm{y}_N \right\|$ by
	\begin{align}
	\left\| (\bm{K}(\bm{X}_N,\bm{X}_N)+\sigma_n^2\bm{I}_N)^{-1}\bm{y}_N \right\|&
	\leq\frac{\|\bm{y}_N\|}
	{\rho_{\min}(\bm{K}(\bm{X}_N,\bm{X}_N)+\sigma_n^2\bm{I}_N)}\nonumber\\
	&\leq \frac{\sqrt{N}\bar{f}
		+\|\bm{\xi}_N\|}{\sigma_n^2},
	\end{align}
	where $\bm{\xi}_N$ is a vector of $N$ i.i.d. zero mean Gaussian random variables with
	variance $\sigma_n^2$. Therefore, it follows that 
	$\frac{\|\bm{\xi}_N\|^2}{\sigma_n^2}\sim\chi_N^2$. Due to 
	\cite{Laurent2000}, with probability of at least $1-\exp(-\eta_N)$ we have 
	\begin{align}
	\|\bm{\xi}_N\|^2\leq \left(2\sqrt{N\eta_N}+2\eta_N+N\right)\sigma_n^2.
	\end{align}
	Setting $\eta_N=\log(\frac{\pi^2N^2}{3\delta})$ and applying the union bounds 
	over all $N>0$ yields
	\begin{align}
	\left\| (\bm{K}(\bm{X}_N,\bm{X}_N)+\sigma_n^2\bm{I}_N)^{-1}\bm{y}_N \right\|\leq
	\frac{\sqrt{N}\bar{f}+\sqrt{2\sqrt{N\eta_N}+2\eta_N+N}\sigma_n}
	{\sigma_n^2}\quad \forall N>0
	\end{align}
	with probability of at least $1-\delta/2$. Hence, the Lipschitz constant of the 
	posterior mean function $\nu_N(\cdot)$ satisfies with probability of at least 
	$1-\delta/2$
	\begin{align}
	L_{\nu_N}\leq L_k\frac{N\bar{f}+\sqrt{N(2\sqrt{N\eta_N}+2\eta_N+N)}\sigma_n}
	{\sigma_n^2}\quad \forall N>0.
	\end{align}
	Since $\eta_N$ grows logarithmically with the number of training samples 
	$N$, it holds that $L_{\nu_N}\in\mathcal{O}(N)$ with 
	probability of at least $1-\delta/2$. The modulus of continuity 
	$\omega_{\sigma_N}(\cdot)$ of the posterior standard deviation can be bounded by
	\begin{align}
	\omega_{\sigma_N}(\tau)\leq\sqrt{2L_k\tau \left( \frac{N\max\limits_{
				\tilde{\bm{x}},\tilde{\bm{x}}'\in\mathbb{X}}k(\tilde{\bm{x}},
			\tilde{\bm{x}}')}{\sigma_n^2}+1\right)}
	\end{align}
	because $\|(\bm{K}(\bm{X}_N,\bm{X}_N)+\sigma_n^2\bm{I}_N)^{-1}\|\leq 
	\frac{1}{\sigma_n^2}$. Due to the union bound \eqref{eq:un_err} holds with 
	probability of at least $1-\delta$ with 
	\begin{align}
	\gamma_N(\tau)\leq&\sqrt{2L_k\tau\beta(\tau)\left( \frac{N\max\limits_{\tilde{\bm{x}},\tilde{\bm{x}}'
				\in\mathbb{X}}k(\tilde{\bm{x}},\tilde{\bm{x}}')}{\sigma_n^2}\!+\!1 \right) }
	\!+\!L_f\tau\!+\!L_k\frac{N\bar{f}\!+\!\sqrt{N(2\sqrt{N\eta_N}\!+\!2\eta_N\!+\!N)}}{\sigma_n^2}\tau.
	\end{align}
	This function must converge to $0$ for $N\rightarrow\infty$ in order to guarantee 
	a vanishing regression error. This is only ensured if $\tau(N)$ decreases faster 
	than $\mathcal{O}((N\log(N))^{-1})$. Therefore, set $\tau(N)\in\mathcal{O}(N^{-2})$ in order 
	to guarantee
	\begin{align}
	\lim\limits_{N\rightarrow\infty}\gamma_N(\tau_N)=0.
	\end{align}
	However, this choice of $\tau(N)$ implies that $\beta_N(\tau(N))\in\mathcal{O}(\log(N))$ 
	due to \eqref{eq:beta(tau)}. Since there exists an $\epsilon>0$ such that 
	$\sigma_N(\bm{x})\in\mathcal{O}\left(\log(N)^{-\frac{1}{2}-\epsilon}\right)$, 
	$\forall\bm{x}\in\mathbb{X}$ by assumption, we have
	\begin{align}
	\sqrt{\beta_N(\tau(N))}\sigma_N(\bm{x})\in\mathcal{O}(\log(N)^{-\epsilon})
	\quad\forall\bm{x}\in\mathbb{X},
	\end{align}
	which concludes the proof.
\end{proof}

\section{Proof of Theorem 4.1}
Lyapunov theory provides the following statement~\cite{Khalil2002}.
\begin{lemma}
	\label{lem:Lyap}
	A dynamical system~$\dot{\x} = \bm{f}(\x,\bm{u})$ is globally ultimately 
	bounded to a set~$\Bset\subset\Xset$, containing the origin, if there 
	exists a 
	positive definite (so called Lyapunov) function, $V:\Xset \to \Rset_{+,0}$, 
	for 
	which~$\dot{V}(\x)<0$, for all~$\x \in \Xset \setminus \Bset$.
\end{lemma}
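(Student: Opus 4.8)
The statement is the classical Lyapunov characterisation of ultimate boundedness, so the plan is to realise \cref{def:boundedness} through a sublevel-set argument. Since $V$ is positive definite and continuous and $\Bset$ is a bounded set containing the origin, I first set $c=\max_{\x\in\Bset}V(\x)$ and introduce the sublevel set $\Omega_c=\{\x\in\Xset:V(\x)\leq c\}$. The decisive observation is that $\Bset\subseteq\Omega_c$, so its complement obeys $\{\x\in\Xset:V(\x)>c\}\subseteq\Xset\setminus\Bset$, and therefore $\dot V(\x)<0$ holds on the whole region where $V$ exceeds $c$. To avoid the degenerate behaviour of $\dot V$ on the level $V=c$ (where points may touch $\Bset$), I fix a slightly larger threshold $c'>c$ and take the ultimate bound to be the constant $b=\max_{\x\in\Omega_{c'}}\norm{\x}$, which depends only on $V$ and $\Bset$ and not on the initial condition, as required.

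First I would prove that $\Omega_{c'}$ is positively invariant. Writing $g(t)=V(\x(t))$ along a solution, the observation above gives $g'(t)=\dot V(\x(t))<0$ whenever $g(t)>c'$. A standard comparison argument then shows $g$ can never cross the value $c'$ from below: if $g(t_1)=c'$ and $g$ were to exceed $c'$ on an interval to the right of $t_1$, then $g'<0$ would hold throughout that interval, forcing $g$ to decrease and contradicting $g>c'$. Hence $V(\x(t))\leq c'$ is preserved once attained, so $\Omega_{c'}$ is forward invariant.

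Next I would establish finite-time entry into $\Omega_{c'}$. Fix $a\in\Rset_+$ and a state with $\norm{\x(t_0)}\leq a$, and put $c_a=\max_{\norm{\x}\leq a,\,\x\in\Xset}V(\x)$, so that $g(t_0)\leq c_a$. Consider the annular region $A=\{\x\in\Xset:c'\leq V(\x)\leq c_a\}$, which is compact because $\Xset$ is compact and $V$ is continuous, and which is contained in $\Xset\setminus\Bset$ since $c'>c=\max_{\x\in\Bset}V(\x)$. As $\dot V=\nabla V\cdot\bm{f}$ is continuous and strictly negative on the compact set $A$, it attains a maximum $-\kappa$ with $\kappa>0$. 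While the trajectory stays in $A$ we thus have $g'(t)\leq-\kappa$, so $g$ decreases at least linearly and reaches the level $c'$ after at most $T=T(a,b)=(c_a-c')/\kappa$ time units. Combining this with the invariance of $\Omega_{c'}$ yields $\x(t)\in\Omega_{c'}$, and hence $\norm{\x(t)}\leq b$, for all $t\geq t_0+T$, which is exactly \cref{def:boundedness}.

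The main obstacle is the treatment of the level set $\{V=c\}$, where $\dot V<0$ is not guaranteed because this surface may meet $\Bset$; this is precisely why the target must be the sublevel set $\Omega_{c'}$ with $c'>c$ rather than $\Bset$ itself, and why I separate the strict-decrease estimate from the boundary of $\Bset$. A secondary technical point is the existence of the strictly negative maximum $-\kappa$ on $A$, which relies on continuity of $\dot V$ together with compactness of $\Xset$; both are available here, so the remaining steps reduce to the routine comparison argument and a single integration.
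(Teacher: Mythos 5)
Your proof is correct. Note first that the paper does not prove this lemma at all --- it is stated as a known result with a citation to Khalil's textbook --- so there is no in-paper argument to compare against; what you have written is essentially the standard proof of Khalil's ultimate boundedness theorem (Theorem 4.18 in that reference): forward invariance of the sublevel set $\Omega_{c'}$ with $c' > c = \max_{\x\in\Bset} V(\x)$, combined with a uniform decrease rate $\kappa > 0$ on the compact annulus $A = \{\x\in\Xset : c' \leq V(\x) \leq c_a\}$, giving the finite entry time $T = (c_a - c')/\kappa$, after which invariance traps the trajectory. Your handling of the level set $\{V = c\}$ via the strictly larger threshold $c'$ is exactly the right move, and the comparison argument for invariance is sound.

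Two remarks. First, your argument establishes ultimate boundedness into $\Omega_{c'}$, not into $\Bset$ itself; strictly speaking that is all one can conclude, and it is all that \cref{def:boundedness} demands, since that definition only asks for a constant $b$ independent of the initial condition. This exposes a mild looseness in the lemma's phrasing ``bounded to a set $\Bset$'': the honest ultimate bound set is the smallest sublevel set of $V$ containing $\Bset$, which your construction makes arbitrarily tight by letting $c' \downarrow c$. Second, you quietly invoke regularity not stated in the lemma: $V$ continuously differentiable and the closed-loop vector field continuous (so that $\dot V$ is continuous and attains a strictly negative maximum on $A$), $\Bset$ closed (so that $c$ is attained; otherwise replace the maximum by a supremum), $\Xset$ compact, and forward existence of solutions remaining in $\Xset$. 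All of these hold in the paper's application --- $V = \frac{1}{2}r^2$, $\Bset$ defined by a non-strict inequality between continuous functions, $\Xset$ compact throughout --- so these are cosmetic rather than substantive gaps, but they are worth stating if the lemma is to stand alone.
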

This allows to proof Theorem 4.1 as following.
\begin{proof}[Proof of Theorem 4.1]
	Consider the Lyapunov function~$V(\x)=\frac{1}{2} r^2$ 
	\begin{align*}
	\dot{V}(\x) &= \frac{\partial V}{\partial r} \dot{r} 
	= r\left(f(\x)-\fh(\x) -\kc r\right)
	\leq \abs{r} \abs*{f(\x)-\nu_{N}(\x)}  -\kc \abs{r}^2
	\leq 0 \quad  \forall \abs{r}>\frac{f(\x)-\nu_{N}(\x)}{\kc}
	\end{align*}
	Based on Theorem 3.1, the model error is bounded with high 
	probability, which allows to conclude
	\begin{align*}
	P\left(\dot{V}(\x)<0 ~\forall\x\in\Xset\setminus \Bset\right)\geq 
	1-\delta.
	\end{align*}
	The global ultimate boundedness of the closed-loop system, is thereby shown 
	according to \cref{lem:Lyap}.
\end{proof}

\section{Report on Computational Complexity of the Numerical Evaluation}
Simulations are performed in MATLAB 2019a on a i5-6200U CPU with 2.3GHz 
and 8GB RAM. The simulation in Sec. 5.1 took 77s and used 1 MB of workspace 
memory. 
The simulation in Sec. 5.2 took 39s and used 134 MB of workspace memory. The 
code is available as supplementary material.

\end{document}